\documentclass[twocolumn]{article}

\usepackage{graphicx}
\usepackage{amssymb}
\usepackage{amsmath}
\usepackage[leftmargin=1em]{quoting}
\usepackage{caption}
\usepackage{multirow}

\newtheorem{pro}{Proposition} 
\newtheorem{lemma}{Lemma}
\newtheorem{proof}{Proof}

\newcommand{\C}{\mathcal{C}} 
\newcommand{\N}{\mathcal{N}} 
\newcommand{\Ob}{\mathcal{O}} 
\newcommand{\W}{\mathcal{W}} 
\newcommand{\R}{\mathbb{R}}
\newcommand{\A}{\mathcal{A}} 
\newcommand{\B}{\mathcal{B}} 

\newcommand{\x}{{{\bf x}}}
\newcommand{\g}{{\bf g}}
\newcommand{\rr}{{\bf r}}
\newcommand{\q}{{\bf q}}
\newcommand{\vv}{{\bf v}}

\date{\today}
\title{Probability Navigation Function for Stochastic Static Environments}

\author{Hacohen Shlomi, Shraga Shoval and Nir Shvalb}
\begin{document}
\maketitle
\begin{abstract}
Navigation function (NF) is widely used for motion planning; such a function is bounded, analytic, and guarantees convergence due to its Morse nature, while having a single minimum point at the target. This results in a safe path to the target. Originally, NF was developed for deterministic scenarios where the positions of the robot and the obstacles are known. Here we extend the concept of NF for static stochastic scenarios. \\
We assume that the robot, the obstacles and the workspace geometries are known discs, while their positions are random variables.  We define a \emph{Probability NF} (PNF) by introducing an additional \emph{permitted collision probability}, which limits the risks (to a set value) during robot motion.\\
We apply the Minkowski sum for the continuous case when considering the geometries with the Probability Density Functions (PDF). The PDF for collision is therefore the normalized convolution of the robot geometry, the obstacles geometries and their locations' PDFs.\\
We give an approximation for the permitted probability for collision. We then formulate an explicit function and prove that it is indeed a PNF. Finally, we exemplify our algorithm performances, and compare its results with a conventional NF algorithm.
\end{abstract}
\section{Introduction}
Motion planning for mobile robots has been extensively studied over the last three decades. Ideally one can assume that properties describing the robot movement, the environment and the obstacles are perfectly known. However, these parameters are often affected by substantial random factors (referred to as \emph{random variables}) due to measurement noises and physical process. {In the presence of uncertainties, even simple notions become non-conclusive, see for example the prediction of collisions between moving objects considered in \cite{kyriakopoulos1992distance}}.  
Researchers have studied algorithms to deal with process randomality: Lazanas and Latombe \cite{lazanas1995landmark} use \emph{a back-projection} algorithm to define areas where sensing may be considered accurate and these areas are added together to form "safe zones." A similar line of action is to maximize certainty by approaching known landmarks. A modification of \emph{A*} algorithm, introduced in \cite{lambert2003safe}, is constituted by adding a fourth dimension to the geometry. This additional dimension corresponds to the uncertainties, forming a  mathematical structure named \emph{"towers of uncertainties,"} which results in a path with a lower uncertainty. Pepy and Lambert \cite{pepy2006safe} define a configuration with the additional uncertainty data $(\sigma_x,\sigma_Y,\sigma_Z,\theta,\phi)$, and then introduce a \emph{safe-RRT} algorithm for a solution. Their experimental results show paths which indeed form safe trees that follow the walls in order to reduce the uncertainties. \\
A different viewpoint which guarantees convergence is to follow the \emph{robust control approach}. This can be applied for path planning in the sensor's image space \cite{mezouar2002path}, or be used to stabilize uncertain non-linear systems along nominal paths \cite{toussaint2000robust}, \cite{majumdar2013robust}. However, note that these methods require well established model equations.\\
A third approach is to replace the obstacles' locations with the computed probability for collision, for example as done for moving obstacles by Fulgenzi \cite{fulgenzi2007dynamic}.\\
This paper presents the problem of motion planning for a static uncertain environment by taking into consideration both the geometry and the location probabilities functions without inflating the ambient space dimension while guaranteeing convergence. This is done by extending the well known deterministic \emph{Navigation Function} (NF). \\
NF \cite{koditschek1990robot}  is one of the best known method due to its mathematical elegance and simplicity. A NF is a continuous smooth function with zero value at the target point and a unity value on the boundaries of the environment and the obstacles. In order to ensure a solution, the NF critical points are non-degenerated (i.e. there are no plateau areas in which the gradient of the NF vanishes). Other concepts of NF's have been proposed, for example  Lavalle and Konkimalla \cite{lavalle2001algorithms} numerically solve a discrete differential equation to obtain a NF which simultaneously yields both the geometric path and the control signal, while other methods provide these in a two-step procedure. Another important advantage of the NF algorithm is its asymptotic convergence property.
Some researchers attempted to apply the classical NF method to uncertain environments: Palejiya and Tanner \cite{palejiya2006hybrid} apply the NF when certain switching conditions are met. Loizou et al. \cite{loizou2003closed} use a NF in a portion of a configuration space with the convergence property verified through computer simulations. 
To the best of our knowledge, no attempt has yet been made to modify the NF to fit stochastic scenarios without inflating the  ambient space dimension. In this paper we shall extend the concept of NF to static stochastic environments and analytically prove its convergence.

\subsection{Problem formulation}
\label{problem form}
Let $\C$ be a robot configuration space. Assume that $\C$ is a subset of a smooth manifold which is $\R^n$. We denote the robot location by $\x_r\in\C$ and the $i$-th obstacle fixed location by $\x_{i\in\Ob}\in\C$, where $\Ob$ denotes the set of obstacles. Here, the location refers to the center of the object.
Since deterministic knowledge about the $\x_r,\x_{i\in\Ob}$, is not always available, we use a set of probability density functions (PDF). In this paper we assume the distributions are Gaussian and denote a PDF by $p(\x)$, where $\x\sim \N(\hat{\x},\Sigma)$.
Note that while the locations are random variables, the geometries of the robot and the obstacles are perfectly known. 
We assume that $\x_r,\x_{i\in\Ob}$ are estimated using a nonlinear filter, which is a source for uncertainties. In this paper we assume that all locations are estimated using an external tracker so the uncertainties of the locations of the  robot, and the obstacles are independent. Our main problem is formulated as follows:
\begin{quoting}\textbf{\textit{
Given a static environment with a probabilistic density function of the robot's and the obstacles' locations, that characterizes the uncertainties of their localizations’, determine whether convergence of motion planning to the target's configuration ($q_d $) is guaranteed for a given allowable probability for collision ($\Delta$), and if so, generate a path that reduces the probability for collisions. }}
\end{quoting}

Note that we seek a smooth connected path $\pi:[0 ,1] \rightarrow \C$, (in what follows we shall track the steepest descent curve of a smooth function $\varphi:\C \rightarrow \R$).
Here, $\Delta$ indicates the highest allowable probability for collision (see for example \cite{pahlajani2014error} and \cite{petersen1998optimal}). It is expected that in some scenarios the robot would follow a shorter path at the expense of the collision probability. Thus, $\Delta$ limits the probability for collision to a user-determined value.\\
Next, assume the obstacles do not intersect even when taking a dilated radius $R_\Delta$ around each obstacle, which encloses $\Delta$ probability for collision (the curve $\Psi$ in Eq.\ref{num_conv}). One can apply a deterministic NF (where all obstacles are dilated by a constant radius $R_\Delta$, c.f.  \cite{loizou2003closed}) to the problem. However, the proposed PNF considers the uncertainty of each obstacle as illustrated in Fig.\ref{fig:NF_vs_PNF}. In this figure the uncertainty of the right obstacle's location is larger than the uncertainty of the left obstacle's location. The PNF considers these different uncertainties, and therefore, we anticipate that it will provide a safer path compared with the deterministic NF approach for uncertainties. 

\begin{figure}[h]
\centering
\includegraphics[width=0.85\linewidth]{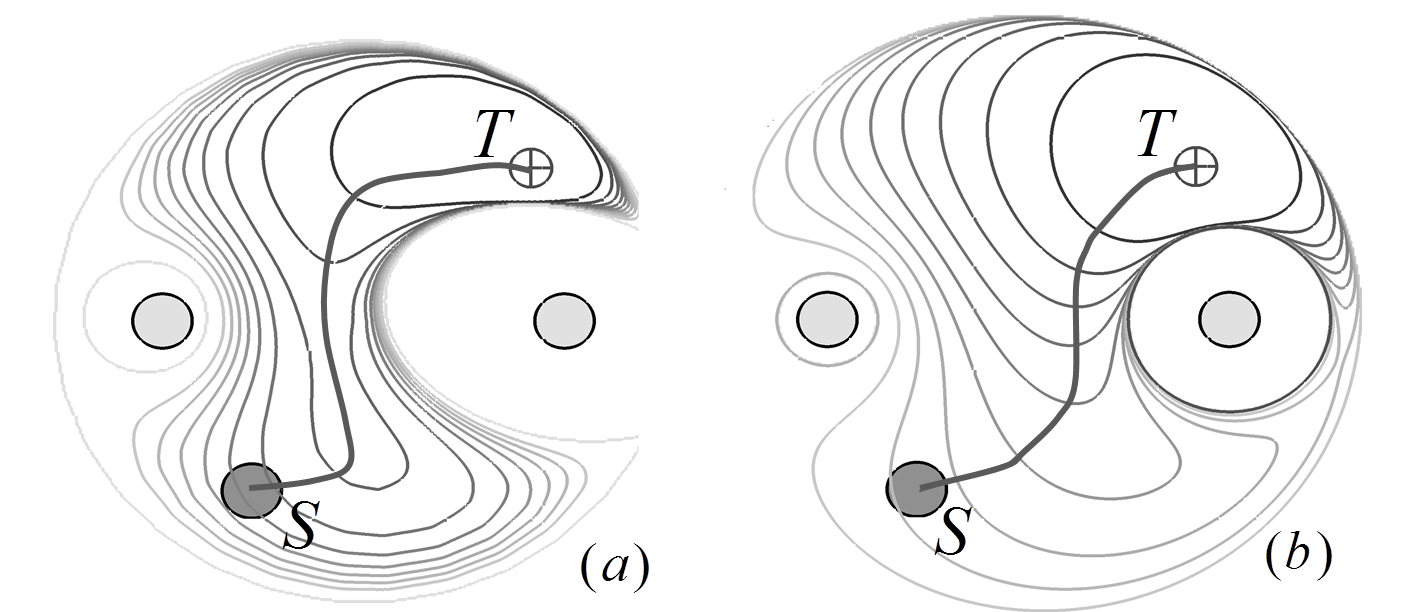}
\caption{A comparison between paths in the PNF (a) and in the NF with a $R_\Delta$ inflated radius (b).$S$ represents the starting point and $T$ the target and  the bold lines indicates the. The light solid discs represent the obstacles and the dark  disc the robot. Here, the STD of the right obstacle is larger than that of the left. While the NF considers the distance from the inflated boundaries, the PNF considers the probability for collision.}
\label{fig:NF_vs_PNF}
\end{figure}

While the theoretical scope of the paper is valid for all dimensions, the mapping of a spherical obstacle from the workspace to the $\C$ may be complex. However, such a description is suitable for a large set of practical scenarios: (1) spatial mobile robots, such as unmanned aerial vehicles  (2) $n$-dimensional serial robots with point obstacles, located "far enough" from the base joint \cite{farber2008invitation}, (3) spider-like planar robots with point obstacles near the end-effector \cite{shvalb2005configuration}.

\section{A Probability density function for collision}
We apply a modified NF in order to incorporate the position uncertainty of the robot and the obstacles, and call this function the \emph{Probability Navigation Function} (PNF) or the \emph{Stochastic Navigation Function} (SNF).
The PNF describes the probability for the robot to collide with an obstacle at a given point, as well as the distance to the target position. In order for the algorithm to be as realistic as possible, the robot and the obstacles possess finite disc shapes (rather than being a point mass).  \vspace{1mm}\\
The shapes of the robot and the obstacles are described by a probability map, and the path is then generated as the PNF gradient.
A common technique used when dealing with motion planning problems (see \cite{ge2006autonomous} \S 10 for extended discussion), is to define the free configuration space $\C_{free}$ (i.e. a subset of $\C\subset \R^n$ where the robot can travel without colliding with obstacles, excluding the boundary). A prevalent method is to define $\C_{free}$ as the complement of $\C_{obs}$, the union of the \emph{Minkowski sums} of the robot with the set of obstacles. Intuitively, the obstacles in the $\C$ space are expended by the robot's volume, while the robot is taken as a point mass. The set of vectors defining the robot's geometry, are measured from its center of mass to any point on the robot body and are denoted by $A$. The set of vectors defining the geometry of \textbf{all obstacles} measured from the origin to their body points are donated by $B$. Thus, we can write:
\begin{equation}
\label{mink}
\C_{obs}=\B \ast  (-\A) = \left\{ {b - a| a \in \A,b \in \B} \right\}
\end{equation}
(we use $\ast $ to denote both Minkowski sum and Convolution operation).  
Note that in order to measure the distance of a point inside the robot from a point inside the obstacle, one should first rotate the robot by $180^o$ (the minus sign in Eq. \ref{mink}). The sets $\A, \B$ are sub spaces of $\C$ making  $\B * (-\A)$  large. One way to overcome this is to confine calculations to an intermediate time step (i.e. $\A,\B \subset \W \subseteq \R^n$). Fig.\ref{fig:Minkowski} demonstrate this process. \\
\begin{figure}[h]
\centering
\includegraphics[width=0.8\linewidth]{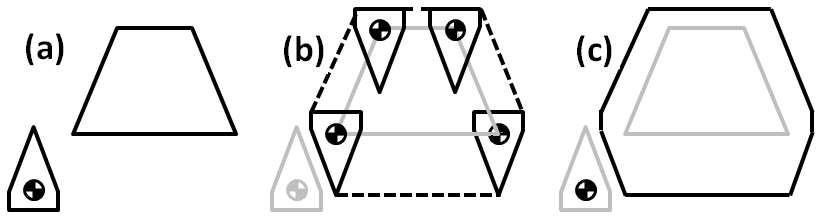}
\caption{The Minkowski sum of (a) a pentagon shaped robot and a trapezoid shaped obstacle. (b) The rotation of the robot by $180^o$. Here, the shortest distance from the low left corner of the obstacle to the robot edge is the same as the shortest distance from the center of the robot to the edge of the obstacle after summation. (c) The resulting  point mass robot and the inflated obstacle.}
\label{fig:Minkowski}
\end{figure}
We shall now incorporate the geometries of the robot and the obstacles together with their location probabilities over three stages (which correspond to \S\ref{geo_conv}, \S\ref{convv} and \S \ref{Convolution of probability}):
\subsection{Convolution of the obstacle's geometry with the robot's \mbox{geometry}.}
\label{geo_conv}
Let us define the disc geometry function as:

$$
 D(\x,\x_c,r) \triangleq \left\{ \begin{array}{l}
 1 ; \ \ \left\| {\x - \x_c} \right\| \le  {r} \\
 0;\ \ otherwise \\
 \end{array} \right. 
$$
Accordingly, define \mbox{$rob\left( x \right)\triangleq D(\x,\x_{r},r_{r})$} as the robot's geometry. Similarly, define the $i$-th obstacle's geometry as \mbox{$ obs_i\left( \x \right)\triangleq D(\x,\hat \x_{o}^i,r_{o})$}.
The Minkowski sum of both functions is denoted by:  
  $$\widetilde{obs_i}\left( \x \right) \triangleq rob\left( \x \right) *  obs\left( \x \right) = D(\x,\hat \x_{o}^i,R) $$ 
where $\x$ is a point in $\C$, and $\hat \x_{o}^i$ the estimated location of the obstacle's center. The estimated location of the robot is  $ \hat \x_{r}$ and $R$ is the radii sum of the robot $r_{r}$ and the obstacle $r_o$. The new robot geometry function is now:
$\widetilde{rob}(\x)=\delta \left( {\x -  {\hat{\x}_{r}}}\right) $, here $\delta$ stands for Dirac's delta function.
\subsection{Convolution of Gaussian functions.}
\label{convv}
To implement the above to a stochastic scenario, let us first consider point-mass obstacles and a point-mass robot with given probability density functions embedded in $\R^n$ for arbitrary configuration space dimension $n$. Eq. \ref{mink} defines a map $f_i: \C \rightarrow \R$.
That is, the Minkowski sum is replaced by a (continuous) convolution of the probability functions. Thus, 
\begin{equation}
\label{gaussian_conv}
{f_i}(\x)  = p\left(  {\x|\hat \x_{o}^i, \Sigma _{o}^i} \right)\ast  p\left( {\x|{\bf 0},{\Sigma _r}} \right)
\end{equation}
following \cite{vinga2004convolution}, Eq. \ref{gaussian_conv} results in the probability function of the $i$-th obstacle location:
\begin{equation*}
{f_i}(\x) = \frac{{{\left( {2\pi } \right)}^{- \frac{n}{2}}}}{{{{\left| {\Sigma _{o}^i + {\Sigma _r}} \right|}^{\frac{1}{2}}}}}{e^{-\frac{1}{2}{{\left( {\hat \x_{o}^i - \x} \right)}^T}{{\left( {\Sigma _{o}^i + {\Sigma _r}} \right)}^{ - 1}}\left( {\hat \x_{o}^i - \x} \right)}}
\end{equation*}
with expectation $\hat \x_{o}^i$ and covariance $\Sigma_i={\Sigma _{o}^i + {\Sigma _r}}$. We denote the distributions for the location of the robot and the locations of the obstacles by:
\begin{equation}
\label{xrobot}
{\x_{r}} \sim \N\left( {{{\hat \x}_{r}},0 } \right)=\delta \left( {\x -  {\hat{\x}_{r}}}\right)
\end{equation}
\begin{equation}
\label{obs_dist}
\x_{o}^i \sim \N\left( {{\hat \x}_{o}^i},\left( {\Sigma _r} + {\Sigma^i_{o}} \right)  \right )= p\left( \x | {{\hat \x}_{o}^i},\Sigma_i \right)
\end{equation}
\subsection{Convolution of probability density function and geometry functions.}
\label{Convolution of probability}
Note that in Eq. \ref{xrobot}, the robot and the obstacles are represented by a point- mass. To extend this we shall now investigate the probability for a collision of a disc shaped obstacle  with a point mass robot (as is often done in motion planning problems).\\
The location of any point $\vv$ of the obstacle relative to a fixed point on the obstacle (e.g. its center of mass) is a  deterministic value. Therefore the location can be defined as a \emph{constant random variable} (see \cite{hodges1970basic}\S 5) by the probability function: ${p_{\vv} }\left( {\x|{\x^i_{o}}} \right) = \delta \left( {\x - \left( {{\x^i_{o}} + \vv } \right)} \right)$.  Note that  the $i$-th obstacle center location $\x^i_{o}$ measured in a global coordinate system, and  $ \vv$ which is measured in a local coordinates system, are independent. The convolution of these functions, which yields the probability distribution function for an infinitely small portion $\vv\in \widetilde{obs_i}(\x)$, is:
\begin{equation*}
 {p_{\vv} }\left( \x \right) = {p_{\vv} }\left( {\x|{\x^i_{o}}} \right) \ast  {f_i}(\x)
\end{equation*} 
Applying \cite{strichartz2003guide} (cf. pg. 53)  yields:
\begin{equation*}
{p_{\vv} }\left( \x \right) = {f_i}\left( {\x - \left( {{{\hat \x}^i}_{o} + {{\vv} }} \right)} \right)
\end{equation*}
which is the PDF for a cell point $\vv$ of the obstacle to be at $\x$. 

Recall that the robot center is at $\x$, we would like to avoid collision  with any of the obstacle's points, so the density function for such a collision is given by the integral:
\begin{equation*}
{p_{tot}}\left(\x,R\right)=\frac{{1}}{A} \int\limits_{{\R^n}} {\widetilde{obs_i}\left( {\vv},R \right){f _i}(\x-\vv)d{{\vv} }}	
\end{equation*}
where $A$ is a normalizing scale factor, selected such that $\int p_{tot}(x)dx=1$ (see \S \ref{Minimal_permited}).\\
Note that $p_{tot}$ concerns only one obstacle. Since multiple obstacles are involved, we shall combine all the corresponding functions $p_{tot}$ when defining the probability NF (see \S \ref{PNF}).\\
For further clarification, note that  the probability for a collision of the robot with an obstacle estimated to be at $\hat{\x}_o$  is  given by the integral:
   \begin{equation*}
   Pr(\|\hat{\x}_r-\hat{\x}_o\|<R)=\int\limits_{\|\hat{\x}_r-\hat{\x}_o\|<R} p_{tot}(\hat{\x}_r,R)
   \end{equation*}
One can think of the convolution operator as locating the $i$-th obstacle at the origin so ${{\hat \x}^i}_{o}=0$, while moving $f_i$ around. This means that $v$ can be considered here to be in either a global or a local coordinates system. Therefore it is easy to see that:
\begin{equation}\label{P_total}
{p_{tot}}\left( \x ,R\right) =\frac{{1}}{A} \widetilde{obs_i}\left( \x,R \right) \ast  {f_i}\left( \x \right)
\end{equation}

\subsection{Convolution of $n$ dimensional disk with Gaussian distribution.} 
\label{N-disc}
We now focus on a specific case applying the analysis introduced in \S \ref{Convolution of probability} above.  Consider Eq. \ref{P_total} \--\ a convolution of a normal distribution $G(\rr)$, where  $\rr\in \R^n$ (with a diagonal covariance of the form: $\Sigma=\sigma I $) and a disc $D(\rr,R)$ :
\begin{equation}
\label{C_r}
C(\rr) = D(\rr,R) * G(\rr)\
\end{equation}
For an arbitrary Gaussian, $\Sigma$ can be taken as a diagonal matrix with all its entries equal to the maximal eigenvalue of the covariance matrix.\\
Assume the disc is centered at the origin and the Gaussian is at $\g\in \R^n$:
\begin{equation*}
    G\left( {\rr-\g} \right)={\left( {\frac{1}
  {{2\pi \sigma}}} \right)^{\frac{n}
  {2}}}{e^{ - \frac{1}{2\sigma} ||\rr - \g|{|^2}}}
\end{equation*}
\cite{plesser2002the} considers the convolution of a disk centered about the origin in $\R^2$ with a Gaussian centered about an arbitrary point. We now generalize Plesser's results, for arbitrary Euclidean ambient space and arbitrary $\sigma$. Eq. \ref{C_r}  may be formulated as:
\begin{equation*}
   C(\g) = \int\limits_{{\mathbb{R}^n}} {G\left( {\rr - \g} \right)D\left( {\rr},R \right)d\rr}
\end{equation*}
The Jacobian for the polar form of the above is \mbox{$J = {r^{n - 1}}\prod\limits_{k = 1}^{n - 2} {{{\sin }^k}\left( {{\phi _{n - 1 - k}}} \right)} $}  (see \cite{NES__blumenson1960derivation}, Pg. 65-66) and thus:
\begin{equation*}
\begin{aligned}
C\left( {\g} \right) =& \int\limits_0^R \frac{2\pi {r^{n - 1}}}{\left( {2\pi \sigma } \right)^{ \frac{n}
{2}}}\left[ {\prod\limits_{k = 1}^{n - 3} {\int\limits_0^\pi  {{{\sin }^k}\left( {{\phi _{n - 1 - k}}} \right)d{\phi _{n - 1 - k}}} } } \right]\cdot\\
&\cdot \int\limits_0^\pi  {{e^{\frac{{ - {r^2} - {g^2} + 2rg\cos \left( {{\phi _1}} \right)}}
{{2\sigma }}}}{{\sin }^{n - 2}}\left( {{\phi _1}} \right)d{\phi _1}} dr 
\end{aligned}
 \end{equation*}
which may be rewritten as:
\begin{equation*}
\begin{aligned}
C & \left( {\g} \right) = \frac{{e^{\frac{{ - {g^2}}}
{{2\sigma }}}}}{{\left( {2\pi \sigma } \right)^{  \frac{n}
{2}}}}\int\limits_0^R 2{\pi ^{n/2}}{r^{n - 1}}{e^{\frac{{ - {r^2}}}
{{2\sigma }}}} \cdot\\
&\cdot \left[ {\frac{1}
{{\sqrt \pi  \Gamma \left( {\left( {n - 1} \right)/2} \right)}}\int\limits_0^\pi  {{e^{\frac{{rg\cos \left( {{\phi _1}} \right)}}
{\sigma }}}{{\sin }^{n - 2}}\left( {{\phi _1}} \right)d{\phi _1}} } \right]dr
\end{aligned}
\end{equation*}
where  $r=||\rr||$  and $g=||\g||$.\\
Following Abramowitz (\cite{abramowitz1972handbook} Eqs. 9.6.10 , 9.6.18):  
\begin{equation*}
\begin{aligned}
C\left( {\g} \right)& = {\left( {2\sigma } \right)^{ - \frac{n}
{2}}}{e^{\frac{{ - {g^2}}}
{{2\sigma }}}}\sum\limits_{k = 0}^\infty  {{\left( {\frac{g}
{{2\sigma }}} \right)}^{2k}}\frac{1}
{{k!}}\cdot \\
& \cdot\left[ {\frac{1}
{{\Gamma \left( {k + n/2} \right)}}\int\limits_0^{{R^2}} {{{\left( {{r^2}} \right)}^{k + \frac{n}
{2} - 1}}{e^{\frac{{ - {r^2}}}
{{2\sigma }}}}d{r^2}} } \right] 
\end{aligned}
\end{equation*}
Recall that $P\left( {a,b} \right) = \frac{1}
   {{\Gamma \left( a \right)}}\int\limits_0^b {{e^{ - x}}{x^{a - 1}}dx} $ is the \emph{Normalized Incomplete Lower Gamma Function}. Rearranging terms results with the equality: 
\begin{equation*}
\frac{1}{{\Gamma (s)}}\int\limits_0^x {{e^{ - t/a}}{t^{s - 1}}dt}  = 
\frac{{{a^{s - 1}}}}
{{\Gamma (s)}}\int\limits_0^{x/a} {{e^{ - z}}{z^{s - 1}}adz}  = P\left( {s,\frac{x}
{a}} \right){a^s}
\end{equation*}
Finally, for an $n$-dimensional disk-shaped obstacles distributed normally,  Eq. \ref{P_total} becomes:
\begin{equation}
\label{P_total2}
\begin{aligned}
{p_{tot}}\left( {\x,R,\sigma } \right)& =  \\
{e^{\frac{{ - {{\left\| \x \right\|}^2}}}
{{2\sigma }}}}&\sum\limits_{m = 0}^\infty  {{{\left( {\frac{{{{\left\| \x \right\|}^2}}}
{{2\sigma }}} \right)}^m}\frac{1}
{{m!}}P\left( {m + \frac{n}
{2},\frac{{{R^2}}}
{{2\sigma }}} \right)}
\end{aligned} 
\end{equation}
where $\x$ is the location vector.  
\subsection{The Gradient and the Hessian of $p_{tot}$.} 
We now calculate the Gradient and Hessian of \ref{P_total2}:
  \begin{equation*} \begin{aligned}
  \nabla {p_{tot}}(\q,R,\sigma ) &= \frac{{\partial {p_{tot}}(\q,R,\sigma )}}
 {{\partial ||\q||_2^2}}\frac{{\partial ||\q||_2^2}}
 {{\partial \q}} =\\
  = 2\q{e^\frac{ - {{\left\| \q \right\|}^2} }{2\sigma}}&\left( \sum\limits_{m = 1}^\infty  \frac{{{\left\| \q \right\|}^{2\left( {m - 1} \right)}}}{{{\left( {2\sigma } \right)}^{m} \left( {m - 1} \right)! }}
 P\left( m + \frac{n}{2},\frac{R^2}{2\sigma } \right) \right.- \\
 & \left.- \sum\limits_{m = 0}^\infty  {{{\left( {2\sigma } \right)}^{ - m - 1}}\frac{{{\left\| \q \right\|}^{2m}}}
 {{m!}}P\left( {m + \frac{n}
 {2},\frac{{{R^2}}}
 {{2\sigma }}} \right)}  \right) 
  \end{aligned}\end{equation*}
 Following Gautschi \cite{gautschi1999note} we know that: $P\left( {a + 1,x} \right) - P\left( {a,x} \right) =  - \frac{{{x^a}{e^{ - x}}}}{{\Gamma \left( {a + 1} \right)}}$, and since the modified Bessel function can be written as: $ {I_a}\left( x \right) = \sum\limits_{m = 0}^\infty  {\frac{1}{{m!\Gamma \left( {m + a + 1} \right)}}{{\left( {\frac{x}{2}} \right)}^{2m + a}}}  $ 
 \begin{equation}
 \label{grad_p_tot}
 \nabla {p_{tot}}(\q,R,\sigma ) = \frac{{ - 2\q{R^n}{e^{ - \left( {\frac{{{R^2} + {{\left\| \q \right\|}^2}}}
 {{2\sigma }}} \right)}}}}
 {{{{\left( {2\sigma } \right)}^{\frac{n}
 {2} + 1}}}}{I_0}\left( {\frac{{\left\| \q \right\|R}}
 {\sigma }} \right)
 \end{equation}
 The Hessian is:
 \begin{equation}
 \label{hessian_p_tot}
 \begin{aligned}
 {\nabla ^2}{p_{tot}}&(\q,R,\sigma )= {e^{ - \frac{{{{\left\| \q \right\|}^2} + {R^2}}}{{2\sigma }}}}\frac{{\q{{\q}^T}{R^n}}}{{{2^{\frac{n}
{2}}}{\sigma ^{\frac{n}{2} + 2}}}}\cdot \\
& \cdot \left( {{I_0}\left( {\frac{{\left\| \q \right\|R}}{\sigma }} \right) - \frac{R}{{\left\| \q \right\|}}{I_1}\left( {\frac{{\left\|\q \right\|R}}{\sigma }} \right)} \right)
 \end{aligned}
 \end{equation}
\subsection{Minimal permitted collision probability}
\label{Minimal_permited}
In order to ensure a reasonably safe movement, we limit the maximal collision probability to a predefined value $\Delta$. In other words, we are interested in a closed curve $\Psi$ in $\C\subseteq \R^n$  such that:
\begin{equation}
\label{num_conv}
\frac{1}{A}\int_{\Psi} {\widetilde{obs_i}\left( \x ,R \right) \ast  {f_i}\left( \x \right)} d\x = \Delta
\end{equation}
We pursue a \emph{safety} distance $R_{\Delta}$ from $\widetilde{obs_i}$ that will ensure  probability for collision of at most $1-\Delta$:
\begin{equation}
\label{delta_1}
\Delta  = \frac{1}
{A}\int\limits_{{\mathbb{R}^n}} {C\left( \xi  \right)d^n {\xi}}
\end{equation}
where $ \xi = |\rr|$ is the distance from the origin and $A$ is the normalization factor, which by expansion yields:
\begin{equation*}
\Delta  = \frac{1}
{A}\int\limits_{{\mathbb{R}^n}} {{e^{\frac{{ - {g^2}}}
{{2\sigma }}}}\sum\limits_{m = 0}^\infty  {\frac{{{g^{2m}}}}
{{{{\left( {2\sigma } \right)}^m}m!}}P\left( {m + \frac{n}
{2},\frac{{{R^2}}}
{{2\sigma }}} \right)d\vec g} } 
\end{equation*}
Again, using \cite{NES__blumenson1960derivation} results in:
\begin{equation*}
\begin{aligned}
\Delta  = \frac{1}
{A}&\frac{{{\pi ^{\left( {n - 1} \right)/2}}}}
{{\Gamma \left( {\left( {n - 1} \right)/2} \right)}}\sum\limits_{m = 0}^\infty  {P\left( {m + \frac{n}
{2},\frac{{{R^2}}}
{{2\sigma }}} \right)} \frac{1}
{{{{\left( {2\sigma } \right)}^m}m!}} \cdot \\
& \cdot \int\limits_0^{R_\Delta ^2} {{{\left( {{g^2}} \right)}^{m + \frac{n}
{2} - 1}}{e^{\frac{{ - {g^2}}}
{{2\sigma }}}}}  \int\limits_0^\pi  {{{\sin }^{n - 2}}\phi d\phi } d{g^2}
\end{aligned}
\end{equation*}
Denoting  the \emph{Double Factorial} by $X!!$ results in the following equality:
\begin{equation*}l(n) = \int\limits_0^\pi  {{{\sin }^n}\phi d\phi }  = \frac{{\left( {n - 1} \right)!!}}
{{n!!}} \cdot \left\{ \begin{gathered}
  \pi  , \ \ n \ mod \ 2 =0 \hfill \\
  2 , \ \ n \ mod \ 2= 1 \hfill \\
\end{gathered}  \right.\end{equation*}
and  the value of $A$ can be simplified due to Fubini's theorem:
\begin{equation*}
\begin{aligned}
A  = \int\limits_{{\mathbb{\R}^n}} {D(\rr)\ast  G(\rr)} d \rr= \int\limits_{{\R^n}} {D(\rr)} d \rr\int\limits_{{\mathbb{R}^n}} {G(\rr)} d\rr
= \frac{R^n \pi ^{n/2}}{\Gamma(\frac{n}{2}+1)}
\end{aligned}
\end{equation*}
Finally Eq. \ref{delta_1} may be written as:
\begin{equation}
\label{find R_delta}
\begin{aligned}
\Delta  = & \frac{{\Gamma \left( {n/2 + 1} \right)}}
{{\sqrt \pi  {R^n}\Gamma \left( {\left( {n - 1} \right)/2} \right)}} \cdot\\
\cdot &\sum\limits_{m = 0}^\infty  {\frac{1}
{{m!}}P\left( {m + \frac{n}
{2},\frac{{{R^2}}}
{{2\sigma }}} \right)\gamma \left( {m + \frac{n}
{2},\frac{{R_\Delta ^2}}
{{2\sigma }}} \right)} 
\end{aligned}
\end{equation}
where  $ \gamma (a,b) = \int\limits_0^b {{e^{ - t}}{t^{a - 1}}dt}$ , is the \emph{Lower  Incomplete Gamma Function}.
Eq. \ref{find R_delta} can be approximately solved for $R_\Delta$. $p_\Delta $ can be calculated as $p_{tot}(x)$ for (all) $x\in\Psi$ which is a circle of radius $R_{\Delta}$ through the obstacle location (i.e. ${p_\Delta } = p_{tot} \left( R_\Delta,R,\sigma \right)$ see also Eq. \ref{P_total2}). 

\section{Probability Navigation Function}
\label{PNF}
This section presents the approach for generating motion planning in uncertain environments. The following discussion is an extension of the deterministic NF suggested by Rimon and Koditschek \cite{koditschek1990robot}.
Denoting the target position by $q_d$, the NF is defined at a point $q\in \C$ as:
\begin{equation}
\label{NF}
{\varphi _k}(\q) = \frac{\gamma_d^2}{{{{\left[ {\gamma_d^{2k} + \beta \left( \q \right)} \right]}^{\frac{1}{k}}}}}
\end{equation}
where $k$ is a predefined constant which ensures the Morse nature of the function (for both the deterministic and probabilistic NFs). Recall that a real-valued smooth function on a differentiable manifold is called \emph{Morse} if all its critical points are non-degenerated; $k$ will be discussed in Section \ref{sec is NF}.
$\gamma_d(\q)={{||\q - {\q_d}|{|^2}}} $ and $\beta \left( \q \right)$ is:
 \begin{equation}
 \beta \left( \q \right) = \prod\limits_{i = 0}^{{N_{o}}} {{\beta _i}\left( \q \right)}
 \end{equation}
\mbox{where: \small $ {\beta _i}\left( \q \right) = \left\{ \begin{array}{l}
   - ||\q - {\q_0}|{|^2} + \rho _0^2 \ \  ; i = 0 \\
  \ \  ||\q - {\q_i}|{|^2} - \rho _i^2    \ \   ; i > 0 \\
  \end{array} \right.$ } \\ 
  \normalsize
Here $\q_0$ defines the center of the permissible area, considered as the coordinates' origin, while $\q_i$ for all $i \in \Ob$, is the center of the $i$-th obstacle.\\
The numerator of Eq. \ref{NF} is defined in such a way that the robot is attracted to the target position, while the denominator ensures obstacle avoidance.  

Considering a stochastic scenario, we would like to minimize the probability for a collision while maintaining the shortest path to the target.  In the deterministic scenario, $\beta _i\left( \q \right)$   is a function of the distance between $\q$ and the obstacle's boundary. Our goal is to replace $\beta _i$ by a function that is based on the probability for collision at location $\q$. We set a threshold value for collision probability by replacing the obstacles' geometric edge by the edges $\Psi$ \--\ discussed above (see Eq. \ref{num_conv}).\\
\label{Modification}
We then modify $\beta$ to fit an uncertain environment. In a deterministic scenario, $\varphi_K$ decreases the distance to the target position while avoiding the obstacles. In a probabilistic scenario the probability for collision would be limited by a predetermined value- $\Delta$. \\
In order to do so, we replace the original $\beta$ with the probabilistic value $p_i(\q)$ \--\ the probability density function at $q$ (discussed in Section \ref{Convolution of probability}). This equals $p_{tot}(\q-\q_i)$ (see Eq. \ref{P_total2}) computed for the $i$-th obstacle ($i\in \Ob $) and for the workspace boundary:
\begin{equation}
\label{beta 1}
\begin{gathered}
\beta _i (\q) =  p_{\Delta_i} - {p_i}\left( \q \right)\\
 \beta _0 (\q)= - p_{\Delta_0} + {p_0}\left( \q \right) 
 \end{gathered}
\end{equation}
where, $ {p_\Delta}_i=p_{tot}\left ( R_{\Delta_i} ,R_i,\sigma_i\right)$ 
Thus, $\beta_i$ and $\beta$ vanish on the extended boundaries of each obstacle defined by $R_{\Delta_i}$,  i.e. where the probability for collision is $\Delta$ (see Fig.\ref{fig:con1}).  \\
Note that $ p_0 $ and $p_{\Delta_0} $ refer to the external boundary, computed based on the probability density function of the robot, and also that $p_{\Delta_0} $ is computed as in Eq. \ref{find R_delta} replacing $\Delta$ with $1-\Delta$. 
\section{Is $\varphi$ a \textit{Probabilistic Navigation Function}? }
\label{sec is NF}
We start by defining a NF in the context of \cite{koditschek1990robot}:\\
{\bf Definition:}
A map is said to be a \textit{navigation function} if it satisfies the following conditions:

\begin{enumerate}
\item It is \emph{analytic} in all $\q\in\C_{free}$;
 \item  It is \emph{polar} throughout $\C$, with single minimum at $\q_d \in \C_{free}$;
  \item It is \emph{morse} on $\C_{free}$; 
 \item It is \emph{admissible} on $\C_{free}$.
   \end{enumerate}

We now extend the above definition to a stochastic scenario and prove that such a function is indeed a probabilistic NF:\\
{\bf Definition:}
A map $\varphi$ is said to be a \textit{probabilistic navigation function} (PNF) if it satisfies the following conditions:

\begin{enumerate}
\item
It is a NF.
\item 
The probability for collision is bounded by a predefined probability $\Delta$.
\end{enumerate}

Note that as a consequence of the above, following {${\boldmath \nabla \varphi}$ minimizes the probability for collision} (subject to decreasing the distance to the target). 
The NF is the composition:
\begin{equation*}
\varphi=\sigma_d \circ \sigma \circ \hat{\varphi}
\end{equation*}
where: $ \sigma_d(x)=(x)^{1/k} $ ; $ \sigma(x)=\frac{x}{1+x} $ and  $ \hat{\varphi}=\frac{\gamma_d^k}{\beta} $ and $\circ$ is the composition operator. 
In this paper we change only $\hat{\varphi}$. According to proposition 2.7 in \cite{koditschek1990robot} it suffices to verify the first condition- (1) only for  $\hat{\varphi}$ (note that the forth requirement directly follows from the definitions).\\
\begin{figure}[h]
\centering
\includegraphics[width=0.55\linewidth]{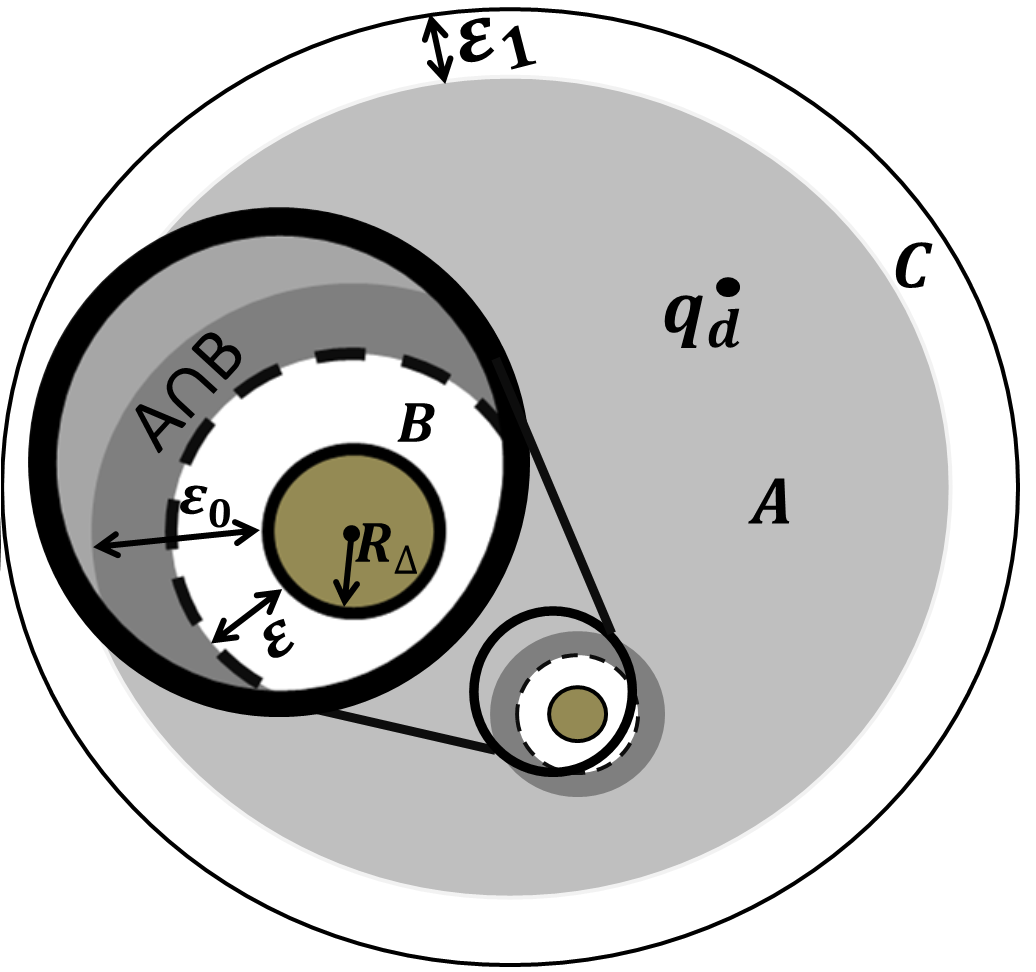}
\caption{The partition of the configuration space: (A) is the region which extends away from the obstacles such that $\beta_i\ge \varepsilon$ and stretches to the configuration space boundary (B) is the region which extends from the obstacles' boundaries. and away from it up to  $\beta_i\le\varepsilon_0$, (C) is the region which extends up to $\beta_0\le\varepsilon_1$ away from  the configuration space boundary. $R_\Delta$ indicates the safety obstacle radius, see Subsection \ref{Minimal_permited}.}
\label{fig:con1}
\end{figure}
We shall now prove that $\hat{\varphi}$ constitutes a NF. In Proposition \ref{prop a} we will prove that $\varphi$ attains a minimum value at the destination $q_d$. In order for our motion planning scheme to converge, $\varphi$ must not have critical points on $\partial \C_{free}$ (i.e. points  where the gradient vanishes), which we shall prove in Proposition \ref{prop b} that results in the interior of $\C_{free}$.\\
For convergence we require that all critical points in $\C_{free}$ are non-degenerated. We refer to this region as "near the \mbox{$i$-th} obstacle" and denote it by $\B_i(\varepsilon)=\{ \q|\ 0<\beta_i(\q)<\varepsilon\}$. Since the obstacles do not intersect, there exist $\varepsilon>0$ such that $\beta_i(\q)\cap\beta_i(\q)=\emptyset$ for all $i\ne j \in\Ob$ and $\beta_i(\q)\cap \q_d=\emptyset$ for all $i\in \Ob$. In other words, we need to prove that:
\begin{equation*}
\C_{free}=\{\q | \beta_i(\q)\geq \varepsilon , \forall i\in\Ob\} \cup  {\B_0}\left( \varepsilon  \right)\cup \bigcup\limits_{i = 1}^{{N_{o}}} {{\B_i}\left( \varepsilon  \right)} 
 \end{equation*} 
has no non-degenerate critical points in either regions (indicated by the three components). Propositions \ref{prop c} and \ref{prop_e} respectively prove that the first and second regions have no critical points, while Proposition \ref{prop_d} proves that all critical points near the obstacles are not local minimum points. In Proposition \ref{prop_f} we conclude that $\varphi$ is a Morse function by showing that the function is non-degenerate near the obstacles.
\begin{pro}
\label{prop a}
The destination region located at $ \q_d $ is a local minimum of $ {\varphi} $. 
\end{pro}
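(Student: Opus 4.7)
The plan is to exploit the composite structure $\varphi = \sigma_d \circ \sigma \circ \hat{\varphi}$ and reduce the question to $\hat\varphi = \gamma_d^k/\beta$ by invoking the fact that both $\sigma(x)=x/(1+x)$ and $\sigma_d(x)=x^{1/k}$ are strictly increasing on $[0,\infty)$ and fix $0$. Once we show $\hat\varphi$ attains a strict local minimum at $\q_d$ with value $0$, the same follows for $\varphi$ by monotonicity.

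First I would verify that $\hat\varphi$ is well defined in a neighborhood of $\q_d$. By hypothesis $\q_d\in\C_{free}$ and $\C_{free}$ is open, so there is a ball around $\q_d$ on which $\beta_i(\q)>0$ for every $i\in\Ob\cup\{0\}$; by continuity of each $\beta_i$ (which is a shifted copy of $p_{tot}$ from Eq.\ref{P_total2}, or its workspace counterpart), the product $\beta(\q)=\prod_i\beta_i(\q)$ is bounded away from zero on a possibly smaller ball $U$ around $\q_d$. Hence $\hat\varphi$ is finite, smooth, and non-negative on $U$.

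Next, evaluating at the target gives $\gamma_d(\q_d)=\|\q_d-\q_d\|^2=0$, so $\hat\varphi(\q_d)=0$. For any $\q\in U\setminus\{\q_d\}$, $\gamma_d(\q)>0$ and $\beta(\q)>0$, so $\hat\varphi(\q)>0=\hat\varphi(\q_d)$. Thus $\q_d$ is a strict local minimum of $\hat\varphi$. Composing with $\sigma$ and $\sigma_d$, both of which are strictly increasing on $[0,\infty)$ and send $0$ to $0$, preserves the strict minimum property and its value, so $\varphi(\q_d)=0$ and $\varphi(\q)>0$ for every $\q\in U\setminus\{\q_d\}$. If desired, one may reinforce this with a direct computation: $\nabla\hat\varphi = (k\gamma_d^{k-1}\nabla\gamma_d\,\beta - \gamma_d^k\nabla\beta)/\beta^2$, and both terms vanish at $\q_d$ because $\nabla\gamma_d(\q_d)=2(\q_d-\q_d)=0$ and $\gamma_d(\q_d)=0$, confirming that $\q_d$ is a critical point; the strict inequality above upgrades this to a local minimum.

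I do not anticipate a real obstacle here: the only subtle point is ensuring $\beta(\q_d)>0$, which is exactly the (standing) assumption that $\q_d$ lies in the open set $\C_{free}$ and away from the $R_{\Delta_i}$-inflated obstacles and from the workspace boundary. The stronger property that this local minimum is in fact the unique (global) minimum on $\C_{free}$ is deferred to Propositions \ref{prop b}--\ref{prop_f}, which handle critical points in the bulk and near the obstacles.
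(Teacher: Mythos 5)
Your proof is correct and is essentially the argument the paper relies on: the paper simply cites Proposition 3.2 of \cite{koditschek1990robot}, whose proof is exactly this value-based observation that $\hat{\varphi}=\gamma_d^k/\beta$ is nonnegative, vanishes only at $\q_d$ on a neighborhood where $\beta>0$, and that composing with the strictly increasing maps $\sigma$ and $\sigma_d$ (both fixing $0$) preserves the strict local minimum. No gaps; your remark deferring uniqueness/polarity to the later propositions is also consistent with how the paper organizes the argument.
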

\begin{proof}
This is identical to the proof of Proposition 3.2 in \cite{koditschek1990robot}.
  \begin{flushright}   \small$\blacksquare$ \normalfont\end{flushright}  
\end{proof}
For the following discussion, we denote:
${\bar \beta }_i=\prod\limits_{j=0, j\ne i}^{N_{o}}\beta_j$
\begin{pro}
\label{prop b}
All the critical points of $ \varphi $ are in the interior of  $ \C_{free} $. 
\end{pro}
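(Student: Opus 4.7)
The plan is to show that $\nabla \varphi$ cannot vanish on $\partial \C_{free}$, i.e.\ on the set where some $\beta_i(\q)=0$. Since $\varphi = \sigma_d\circ\sigma\circ\hat\varphi$ is a composition with the strictly monotone increasing diffeomorphisms $\sigma$ and $\sigma_d$ on their relevant ranges, the critical points of $\varphi$ agree with those of $\hat\varphi$ wherever $\hat\varphi$ is finite and differentiable. Hence it suffices to examine $\hat\varphi(\q)=\gamma_d^k(\q)/\beta(\q)$ and its limit as $\beta\to 0$.

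Differentiating $\hat\varphi$ gives
\begin{equation*}
\nabla \hat\varphi \;=\; \frac{k\gamma_d^{k-1}\beta\,\nabla\gamma_d \;-\; \gamma_d^{k}\,\nabla\beta}{\beta^{2}}.
\end{equation*}
As $\q$ approaches a point of $\partial\C_{free}$ we have $\beta\to 0$ but $\gamma_d(\q)>0$ (since $\q_d$ lies in the interior of $\C_{free}$ by the standing assumption that no $\B_i(\varepsilon)$ meets $\q_d$). Passing this through the chain rule for $\varphi=\sigma_d\circ\sigma\circ\hat\varphi$, the prefactor $\sigma'(\hat\varphi)\,\sigma_d'(\sigma(\hat\varphi))$ decays like $\beta^{2}/\gamma_d^{2k}$, so the boundary limit reduces to
\begin{equation*}
\lim_{\q\to\partial\C_{free}}\nabla\varphi(\q) \;=\; -\frac{1}{k\,\gamma_d^{k}(\q)}\,\nabla\beta(\q).
\end{equation*}
Thus the Proposition is equivalent to showing $\nabla\beta\neq 0$ at every point of $\partial\C_{free}$.

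Fix such a point $\q$ and suppose $\beta_i(\q)=0$ for a particular $i\in\Ob\cup\{0\}$. By the product rule, $\nabla\beta(\q)=\bar\beta_i(\q)\,\nabla\beta_i(\q)$. The factor $\bar\beta_i(\q)$ is nonzero by the standing non-intersection hypothesis on the dilated obstacles (and the separation of $\q_0$ from the interior obstacles), so the problem collapses to showing $\nabla\beta_i(\q)\neq 0$ on the level set $\{\beta_i=0\}$, which is the safety curve $\Psi_i$ at distance $R_{\Delta_i}$ from $\hat\x_o^{i}$. By (\ref{beta 1}) one has $\nabla\beta_i=\mp\nabla p_i$, and (\ref{grad_p_tot}) gives
\begin{equation*}
\nabla p_{tot}(\q,R,\sigma) \;=\; -\frac{2\q R^{n} e^{-(R^{2}+\|\q\|^{2})/(2\sigma)}}{(2\sigma)^{n/2+1}}\,I_{0}\!\left(\tfrac{\|\q\|R}{\sigma}\right).
\end{equation*}
Since the modified Bessel function $I_{0}$ is strictly positive and the exponential never vanishes, $\nabla p_{tot}$ is zero \emph{only} at the obstacle centre $\q=\hat\x_o^{i}$, where $\|\q-\hat\x_o^{i}\|=0<R_{\Delta_i}$. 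The centre therefore lies strictly inside $\Psi_i$, not on it, so $\nabla p_i(\q)\neq 0$ on $\Psi_i$; the same argument applied with the reversed sign handles the workspace boundary $\beta_0$, where $R_{\Delta_0}$ is again strictly positive.

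The main subtlety I anticipate is the passage to the boundary limit, since $\hat\varphi$ itself blows up there; to be rigorous one should either evaluate $\nabla\varphi$ on a sequence $\q_n\to\partial\C_{free}$ and show the limit is nonzero (as sketched above), or equivalently write $\varphi$ in a form that is smooth at the boundary and take the gradient there. Once that limiting argument is made clean, the rest is the straightforward combination of the product-rule decomposition of $\nabla\beta$, the non-intersection assumption on the dilated obstacles, and the explicit positivity of $I_{0}$ in formula (\ref{grad_p_tot}).
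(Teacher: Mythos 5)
Your proposal is correct and follows essentially the same route as the paper: evaluate $\nabla\varphi$ on the boundary, observe that it reduces to $-\frac{1}{k\gamma_d^{k}}\bar\beta_i\nabla\beta_i$, and conclude from $\bar\beta_i\neq 0$ and $\nabla\beta_i\neq 0$. Your only additions are the explicit limit bookkeeping through $\sigma_d\circ\sigma$ (the paper differentiates $\varphi$ directly, which is smooth at the boundary) and the justification via Eq.~\ref{grad_p_tot} that $\nabla p_{tot}$ vanishes only at the obstacle centre, a point the paper asserts without proof.
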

\begin{proof}
We focus our attention on some point $\q'\in\partial \C_{free}$. Obviously $\beta_i=0$ for a certain $i \in \Ob$,  and $\beta_j>0$ for the rest $j\ne i$.
Differentiating yields:
\begin{equation*}
\nabla \varphi \left( {\q'} \right) = {\left. {\frac{1}
{{{\gamma _d}}}\left( {\nabla {\gamma _d} - \frac{1}
{k}\gamma _d^{1 - k}\left( {k\gamma _d^{k - 1}\nabla {\gamma _d} + \nabla \beta } \right)} \right)} \right|_{\q'}}=
\end{equation*}
\begin{equation*}
=  - \frac{1}
{{k\gamma _d^k}}\nabla {\beta _i}{{\bar \beta }_i} \ne 0
\end{equation*}
which proves the proposition since:
 \begin{equation*}\nabla {\beta _i}\left( {\q'} \right) =  - \nabla {p_{tot}}\left( {\q' - {\q_i},{R_i},{\sigma _i}} \right) \ne 0\end{equation*}
 \begin{flushright} \small$\blacksquare$ \normalfont\end{flushright}  
\end{proof} 
As $k$ increases, the  critical points of $\hat{\varphi}$ approach those of $\gamma_d$. We show this by proving that there are no critical points far away from the obstacles:
\begin{pro}
\label{prop c}
For every $\varepsilon>0$ there exist $N(\varepsilon)\in \R$ such that for all $k \geq N(\varepsilon)$, $\varphi$ has no critical points in $\{\q | \beta_i(\q)\geq \varepsilon , \forall i\in\Ob\}$. 
\end{pro}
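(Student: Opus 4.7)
The strategy is to write the critical-point condition for $\hat\varphi=\gamma_d^{\,k}/\beta$ as a scalar identity relating $k$ to a ratio of purely geometric quantities, and then show that this ratio is uniformly bounded on the region $S_\varepsilon=\{\q\in\C_{free}:\beta_i(\q)\ge\varepsilon\ \forall i\}$. Choosing $N(\varepsilon)$ larger than this uniform bound will make the identity unsatisfiable, ruling out every critical point other than $\q_d$ itself (which remains a critical point for every $k$ by Proposition~\ref{prop a}).

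First I would differentiate:
\[
\nabla\hat\varphi \;=\; \frac{\gamma_d^{\,k-1}}{\beta^{2}}\Bigl(k\,\beta\,\nabla\gamma_d-\gamma_d\,\nabla\beta\Bigr).
\]
On $\C_{free}$ we have $\beta>0$, and away from $\q_d$ also $\gamma_d>0$, so $\nabla\hat\varphi=0$ is equivalent to $k\,\beta\,\nabla\gamma_d=\gamma_d\,\nabla\beta$. Taking Euclidean norms and using $\|\nabla\gamma_d\|=2\|\q-\q_d\|=2\sqrt{\gamma_d}$ reduces this to the scalar identity
\[
k \;=\; \frac{\sqrt{\gamma_d}\,\|\nabla\beta\|}{2\,\beta}.
\]
Hence any critical point $\q\ne\q_d$ lying in $S_\varepsilon$ must realize this equation.

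Next I would bound the right-hand side uniformly on $S_\varepsilon$. The denominator is immediate: on $S_\varepsilon$ each factor satisfies $\beta_i\ge\varepsilon$, so $\beta\ge\varepsilon^{\,N_o+1}$. For the numerator, $\sqrt{\gamma_d}$ is bounded above by the diameter $D$ of $\C_{free}$. Each $\nabla\beta_i$ equals $\pm\nabla p_{tot}(\q-\q_i,R_i,\sigma_i)$, which by Eq.~\ref{grad_p_tot} is a continuous function of $\q$ on the compact set $\C_{free}$, so it attains a finite maximum. Applying the product rule to $\beta=\prod_i\beta_i$ then gives a finite constant $M$ with $\|\nabla\beta\|\le M$, depending only on the radii $R_i$ and the variances $\sigma_i$. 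Combining,
\[
\frac{\sqrt{\gamma_d}\,\|\nabla\beta\|}{2\,\beta}\;\le\;\frac{D\,M}{2\,\varepsilon^{\,N_o+1}} \;=:\; N(\varepsilon).
\]
For any $k>N(\varepsilon)$ the critical-point identity cannot hold anywhere on $S_\varepsilon\setminus\{\q_d\}$, which proves the proposition.

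The main obstacle will be establishing the uniform bound $M$ on $\|\nabla\beta\|$: the expression in Eq.~\ref{grad_p_tot} contains the modified Bessel $I_0(\|\q\|R/\sigma)$, which grows exponentially, and one must observe that it is tamed by the Gaussian prefactor $\exp(-(R^2+\|\q\|^2)/2\sigma)$ so that the product is continuous and therefore bounded on the compact domain. A secondary subtlety is that $\q_d\in S_\varepsilon$ is always a critical point; the statement must be read as "no critical point other than $\q_d$," which is exactly what the argument above delivers, since the derivation of the scalar identity assumed $\gamma_d>0$.
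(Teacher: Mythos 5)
Your proposal is correct and follows essentially the same route as the paper: differentiate $\hat\varphi$, reduce the critical-point condition to $k\beta\nabla\gamma_d=\gamma_d\nabla\beta$, take norms, and choose $N(\varepsilon)$ above the resulting ratio $\sqrt{\gamma_d}\,\|\nabla\beta\|/(2\beta)$. The only differences are cosmetic: the paper gets a sharper constant by writing $\nabla\beta=\sum_i\nabla\beta_i\,\bar\beta_i$ and using $\beta/\bar\beta_i=\beta_i\ge\varepsilon$ (yielding a $1/\varepsilon$ rather than your $1/\varepsilon^{N_o+1}$), while you are slightly more careful in noting explicitly that $\q_d$ itself remains a critical point and that the Bessel growth in Eq.~\ref{grad_p_tot} is dominated by the Gaussian factor.
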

\begin{proof}
Note that if $\hat{\varphi}$ has no critical points at a given region, neither will $\varphi$. Thus we prove the proposition for $\hat{\varphi}$.  \\
A critical point satisfies:
\begin{equation*} \nabla \left( \hat \varphi  \right) = \frac{{\gamma _d^{k - 1}\left( {k\beta \nabla {\gamma _d} - {\gamma _d}\nabla \beta } \right)}}{{{\beta ^2}}} = 0 \end{equation*} 
so:
 \begin{equation}
  \label{k beta}
  k\beta \nabla {\gamma _d} = {\gamma _d}\nabla \beta
 \end{equation}
Taking the magnitude of Eq. \ref{k beta} yields:
\mbox{$ k\beta \left\| {\nabla {\gamma _d}} \right\| = {\gamma _d}\left\| {\nabla \beta } \right\|$} 
To avoid a critical point we require:
 $k > \frac{\gamma _d{\left\| {\nabla \beta } \right\|}}
{\left\| {\nabla {\gamma _d}} \right\| \beta } $ \\
Since, $\nabla\beta=\sum\limits_{i=0}^{N_{o}}\nabla\beta_i \bar{\beta}_i$,  $\left\|\nabla \gamma_d \right\|=2 \sqrt{\gamma_d}$ and \mbox{$\frac{\beta}{\bar{\beta}_i} = \beta_i \geq \varepsilon$}, the parameter $k$ must comply with the following constraint:
\begin{equation}
\label{N epsilon}
k \geqslant \frac{1}
{{2\varepsilon }}\max\limits_{\C}  \{{\sqrt {{\gamma _d}} }\} \sum\limits_{}^{} {\max\limits_{\C}  \{{\| {\nabla {\beta _i}} \|}\} }  \triangleq N\left( \varepsilon  \right)
\end{equation}
with $\max \limits_{\q} \{\gamma_d(\q)\}= R_0+\|\q_d\|$.
  \begin{flushright}   \small$\blacksquare$ \normalfont\end{flushright}  
\end{proof}
\begin{pro}
\label{prop_d}
There exists an $\varepsilon_0 $ such that $\hat \varphi$ has no local minimum in the set $\B_i(\varepsilon), i \in \Ob $ (near the obstacles) for $\varepsilon\leq \varepsilon_0$:  
\end{pro}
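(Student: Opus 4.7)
My plan is to follow the strategy of Proposition~3.3 in \cite{koditschek1990robot}, adapted to the probability density $p_i$ in place of the polynomial obstacle function. The goal is to exhibit, at every critical point $\q^{*}\in\B_i(\varepsilon)$, a direction in which the Hessian of $\hat{\varphi}$ is strictly negative, so $\q^{*}$ cannot be a local minimum. It suffices to work with $\hat{\varphi}$, since composing with the two sharpening maps in $\varphi=\sigma_d\circ\sigma\circ\hat{\varphi}$ preserves the Morse index.

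First I would extract the geometric content of the critical-point equation $k\beta\,\nabla\gamma_d=\gamma_d\,\nabla\beta$. Splitting $\nabla\beta=\bar{\beta}_i\nabla\beta_i+\beta_i\nabla\bar{\beta}_i$, the assumed non-intersection of the inflated obstacles keeps $\bar{\beta}_i$ bounded below by a positive constant on $\B_i(\varepsilon)$ for small $\varepsilon$, so $\nabla\beta$ is dominated by $\bar{\beta}_i\nabla\beta_i$ as $\beta_i\to 0$. Since $p_i$ is radially symmetric about $\q_i$ (the integrand in Eq.~\ref{P_total2} depends only on $\|\q\|$), the vector $\nabla\beta_i=-\nabla p_i$ is radial from $\q_i$, and nonzero on $\B_i(\varepsilon)$ because $I_0>0$ in Eq.~\ref{grad_p_tot}. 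This forces $\nabla\gamma_d(\q^{*})$ to be approximately parallel to $\q^{*}-\q_i$, so $\q^{*}$ lies near the line joining $\q_i$ and $\q_d$.

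Next I would compute the Hessian. At a critical point of a ratio $f/g$ the mixed terms cancel thanks to $g\nabla f=f\nabla g$, giving
\begin{equation*}
\nabla^2\hat{\varphi}(\q^{*})=\frac{1}{\beta}\Bigl[\nabla^2(\gamma_d^{k})-\hat{\varphi}\,\nabla^2\beta\Bigr]\bigg|_{\q^{*}}.
\end{equation*}
Take $e_t$ to be any unit vector perpendicular to $\nabla\beta_i(\q^{*})$; by the previous paragraph it is also perpendicular to $\nabla\gamma_d(\q^{*})$, and $\nabla^2\gamma_d=2I$ yields $e_t^{T}\nabla^2(\gamma_d^{k})e_t=2k\gamma_d^{k-1}$. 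For $\nabla^2\beta$, the cross term $\nabla\beta_i\otimes\nabla\bar{\beta}_i$ is killed by $e_t\perp\nabla\beta_i$, leaving $e_t^{T}\nabla^2\beta\,e_t=\bar{\beta}_i\,e_t^{T}\nabla^2\beta_i\,e_t+O(\beta_i)$. The standard decomposition of the Hessian of a radial function $P(r)$ gives tangential eigenvalue $P'(r)/r$, so with $p_i$ strictly decreasing in $r=\|\q-\q_i\|$ (again Eq.~\ref{grad_p_tot}) one obtains $e_t^{T}\nabla^2\beta_i\,e_t=-p_i'(r)/r>0$.

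Combining these with $\hat{\varphi}=\gamma_d^{k}/(\beta_i\bar{\beta}_i)$, the $\hat{\varphi}\,\nabla^2\beta$ contribution to the tangential quadratic form scales like $\beta_i^{-2}$ and carries a negative sign, and therefore overwhelms the positive $O(\beta_i^{-1})$ piece coming from $2k\gamma_d^{k-1}/\beta$ as $\beta_i\to 0$. Choosing $\varepsilon_0$ small enough makes $e_t^{T}\nabla^2\hat{\varphi}(\q^{*})e_t<0$ uniformly on $\B_i(\varepsilon_0)$, so no critical point there can be a local minimum. The main obstacle I anticipate is producing a uniform lower bound for $|p_i'(r)/r|$ over $\B_i(\varepsilon)$ and controlling the auxiliary terms $\beta_i\nabla^2\bar{\beta}_i$ absorbed into the $O(\beta_i)$ remainder; both reduce to compactness arguments once the non-intersection of the inflated obstacles is invoked.
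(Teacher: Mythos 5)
Your proposal is correct and follows the same architecture as the paper's proof: evaluate $\nabla^2\hat\varphi$ at a critical point using $k\beta\nabla\gamma_d=\gamma_d\nabla\beta$, test it against a unit vector orthogonal to $\nabla\beta_i$ (which kills the cross terms $\nabla\beta_i\nabla\bar\beta_i^T$ and isolates $\bar\beta_i\,e_t^T\nabla^2\beta_i\,e_t$ as the $O(\beta_i^{-2})$ leading contribution), and then shrink $\varepsilon_0$ until that contribution dominates the remaining $O(\beta_i^{-1})$ terms. The one substantive difference is how the sign of the leading term is justified, and there your route is the more robust one: you use the radial decomposition of the Hessian of $p_i$, whose tangential eigenvalue is $p_i'(r)/r<0$ by Eq.~\ref{grad_p_tot}, giving $e_t^T\nabla^2\beta_i\,e_t=-p_i'(r)/r>0$ and hence a strictly negative term $-\hat\varphi\,\bar\beta_i\,e_t^T\nabla^2\beta_i\,e_t/\beta$. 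The paper instead invokes its closed form Eq.~\ref{hessian_p_tot} and asserts that $\nabla^2\beta_i$ is negative definite; but that expression is proportional to the rank-one matrix $\q\q^T$ and therefore annihilates exactly the tangential direction $\hat v\perp\nabla\beta_i$ used in the proof --- the isotropic part $(p_i'(r)/r)I$ on which your argument rests is precisely what is missing there, and it is what makes the leading term nonvanishing. Two points to tighten in your write-up: first, $e_t$ is orthogonal to $\nabla\beta_i$ but only approximately to $\nabla\gamma_d$, since the critical-point identity aligns $\nabla\gamma_d$ with $\nabla\beta=\bar\beta_i\nabla\beta_i+\beta_i\nabla\bar\beta_i$; the rank-one piece of $\nabla^2(\gamma_d^k)$ you drop equals $(k-1)\gamma_d^k\,(e_t^T\nabla\bar\beta_i)^2/(k\,\bar\beta_i^2)$ after substituting the tensor square of the critical-point condition (as the paper does), which is nonnegative and still only $O(\beta_i^{-1})$ after dividing by $\beta$, so it is harmlessly absorbed into the dominated remainder --- but you should carry it rather than claim exact orthogonality. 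Second, the uniform lower bound on $-p_i'(r)/r$ that you flag follows from continuity and strict positivity of the Bessel factor in Eq.~\ref{grad_p_tot} on the compact closure of $\B_i(\varepsilon)$, so that gap closes immediately; the paper's analogue is its explicit definition of $\varepsilon_0$ as a ratio of extrema bounded via the lemmas in the Appendix.
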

\begin{proof}
The NF must "flow" around the obstacles. We therefore, show that at least one eigenvalue of $\nabla^2 \hat{\varphi}$ is negative by calculating the projection onto the direction perpendicular to the gradient of $\beta_i$ at $\q$. \\
Consider a critical point $\q_c \in \B_i (\varepsilon)$. The Hessian of $\hat \varphi$ is:
\begin{equation*}
\begin{aligned}
&{\nabla ^2}\hat \varphi \left( \q \right) = \frac{1}
{{{\beta ^2}}}\left( {\beta {\nabla ^2}\gamma _d^k - \gamma _d^k{\nabla ^2}\beta } \right) =\\ 
=&\frac{{\gamma _d^{k - 2}}}
{{{\beta ^2}}}\left( {k\beta \left( {{\gamma _d}{\nabla ^2}{\gamma _d} + \left( {k - 1} \right)\nabla {\gamma _d}\nabla \gamma _d^T} \right) - \gamma _d^2{\nabla ^2}\beta } \right)
\end{aligned}
\end{equation*}
Taking the tensor product of both sides of Eq. \ref{k beta} yields:
\begin{equation*}
{\left( {k\beta } \right)^2}\nabla {\gamma _d}\nabla \gamma _d^T = \gamma _d^2\nabla \beta \nabla {\beta ^T}
\end{equation*}
So, the Hessian of $\hat \varphi$ becomes:
\begin{equation}
\label{Hessian_varphi}
{\nabla ^2}\hat \varphi \left( \q \right) =
\frac{{\gamma _d^{k - 1}}}
{{{\beta ^2}}}\left( {k\beta {\nabla ^2}{\gamma _d} +\frac{k-1}{k} \frac{{{\gamma _d}}}
{\beta }\nabla \beta \nabla {\beta ^T} - \gamma _d^2{\nabla ^2}\beta } \right)
\end{equation}
Let us denote	 $A_s\triangleq \frac{1}{2}(A+A^T)$ \--\ the symmetric part of the matrix $A$, so we can write:
\begin{equation*}
\begin{aligned}
{\nabla ^2}&\hat \varphi ( \q ) = \frac{{\gamma _d^{k - 1}}}
{{{\beta ^2}}}\left( k\beta {\nabla ^2}{\gamma _d} + \left( 1 - \frac{1}{k} \right) \cdot \right. \\
  \cdot \frac{{{\gamma _d}}}{\beta }&\left( {\beta _i^2  \nabla {{\bar \beta }_i}\nabla {{\bar \beta }_i}^T + 2{\beta _i}{{\bar \beta }_i}{{\left( {\nabla {{\bar \beta }_i}\nabla {\beta _i}^T} \right)}_s} + {{\bar \beta }_i}^2\nabla {\beta _i}\nabla {\beta _i}^T} \right) - \\ 
& \left. - {\gamma _d}\left( {{\beta _i}{\nabla ^2}{{\bar \beta }_i} + 2{{\left( {\nabla {{\bar \beta }_i}\nabla {\beta _i}^T} \right)}_s} + {{\bar \beta }_i}{\nabla ^2}{\beta _i}} \right) \right)
\end{aligned}
\end{equation*}
Note that $\nabla {\beta ^T}\hat v = {{\hat v}^T}\nabla \beta  = 0 $,  and $\nabla^2\gamma_d=2I $. Taking the quadratic form of $\hat{\varphi}$ by an arbitrary orthogonal vector to  $\nabla \beta_i$:
 $\hat v \triangleq \frac{{\nabla {\beta _i}\left( {{\q_c}} \right)}}
{{\left\| {\nabla {\beta _i}\left( {{\q_c}} \right)} \right\|}} \bot$  
we can write:
\begin{equation}
\label{v nabla phi v}
\begin{aligned}
\hat v^T{\nabla ^2}&\hat \varphi \left( \q \right)\hat v =  \gamma _d \bar{\beta}_i v^T \nabla^2\beta_i v + \\
+ {\beta _i} & \hat v^T  \left(2 k \beta I +  \left(1-\frac{1}{k}\right)\beta_i \nabla \bar \beta_i\nabla \bar \beta_i^T +\gamma_d \nabla^2 \bar{\beta}_i   \right)\hat v  
\end{aligned}
\end{equation} 
It is hard to conclude whether the second component is positive or not. 
But note that the Hessian of $\beta_i$ (see Eq. \ref{hessian_p_tot})
\begin{equation*}
{\nabla ^2}{\beta _i} =  - {\nabla ^2}{p_{tot}}(\q - {\q_i},{R_i},{\sigma _i})
\end{equation*}
is negative definite since, $I_0(x)>I_1(x) \ \forall x $ and $\|\q-\q_i\|>R_i \ \forall q \in \B_i(\varepsilon)$. Additionally, both $\gamma _d$  and  ${\bar \beta }_i$ are positive, therefore the second term is negative. 
\\
To ensure that Eq. \ref{v nabla phi v} is negative we can bound $\beta_i$ with $\varepsilon$ by: \small

 $$
\varepsilon_<  \varepsilon_0 \triangleq  
   \frac{\min\limits_{\q \in \B_i(\varepsilon)} \{ \gamma _d \bar{\beta}_i v^T \nabla^2\beta_i v \}}  {\max\limits_{\q \in \B_i(\varepsilon)} \{\hat v^T  \left(2 k \beta I +  \left(1-\frac{1}{k}\right)\beta_i \nabla \bar \beta_i\nabla \bar \beta_i^T +\gamma_d \nabla^2 \bar{\beta}_i   \right)\hat v   \} } 
$$  \normalfont
See Lemmas \ref{lemma_max_nab_beta_i}, \ref{max_bar_beta} and \ref{max_nabla_bar} in the Appendix for explicit expressions of the extermal terms.
 \begin{flushright}   \small$\blacksquare$ \normalfont\end{flushright}  
\end{proof}
\begin{pro}
\label{prop_e}
If $ k \geqslant N(\varepsilon)$ then there exists an $\varepsilon_1>0 $ such that $\hat{\varphi}$ has no critical points near the workspace boundary, as long as $\varepsilon \leq \varepsilon_1$.
\end{pro}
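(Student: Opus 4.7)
The plan is to mirror Proposition~\ref{prop c}, with $\beta_0$ now playing the role of the ``small'' factor in $\beta$ in place of some $\beta_i$, $i\in\Ob$. I would start by letting $\q_c \in \B_0(\varepsilon_1)$ be a putative critical point of $\hat{\varphi}$, so that Eq.~\ref{k beta} applies: $k\beta \nabla\gamma_d = \gamma_d \nabla\beta$. Writing $\beta = \beta_0\bar{\beta}_0$ and $\nabla\beta = \bar{\beta}_0 \nabla\beta_0 + \beta_0 \nabla\bar{\beta}_0$, then dividing through by the strictly positive quantity $\bar{\beta}_0$, the critical-point identity rearranges to
\begin{equation*}
\gamma_d \nabla\beta_0 = k\beta_0 \nabla\gamma_d - \gamma_d \beta_0 \frac{\nabla\bar{\beta}_0}{\bar{\beta}_0}.
\end{equation*}
Taking norms and applying the triangle inequality to the right-hand side yields
\begin{equation*}
\gamma_d \|\nabla\beta_0\| \leq \beta_0 \left( k \|\nabla\gamma_d\| + \gamma_d \frac{\|\nabla\bar{\beta}_0\|}{\bar{\beta}_0} \right).
\end{equation*}

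Next, I would pick $\varepsilon_1 > 0$ small enough that $\B_0(\varepsilon_1)$ is disjoint from every $\B_i(\varepsilon)$ with $i\in\Ob$ and from a fixed neighbourhood of $\q_d$. On this set, $\bar{\beta}_0$, $\gamma_d$, and $\|\nabla\beta_0\|$ each admit a strictly positive lower bound: the first because the dilated obstacle boundaries $\Psi_i$ are separated from $\Psi_0$; the second because $\q_d$ lies in the interior; and the third via Eq.~\ref{grad_p_tot}, since the Bessel factor $I_0$ is strictly positive and $\q_c$ is bounded away from the centre of the workspace PDF. The right-hand-side quantities $\|\nabla\gamma_d\|=2\sqrt{\gamma_d}$ and $\|\nabla\bar{\beta}_0\|/\bar{\beta}_0$ are continuous and hence bounded above on the compact configuration space. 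The displayed inequality then forces
\begin{equation*}
\beta_0 \geq \frac{\min_{\q\in\B_0(\varepsilon_1)} \gamma_d \|\nabla\beta_0\|}{\max_{\q\in\B_0(\varepsilon_1)} \left[ k\|\nabla\gamma_d\| + \gamma_d \|\nabla\bar{\beta}_0\|/\bar{\beta}_0 \right]}.
\end{equation*}
Because the numerator tends to a strictly positive limit while the denominator stays bounded as $\varepsilon_1 \to 0$, shrinking $\varepsilon_1$ far enough makes the right-hand side strictly exceed $\varepsilon_1$, contradicting $\beta_0 < \varepsilon_1$. Hence no critical point of $\hat{\varphi}$ can lie in $\B_0(\varepsilon_1)$.

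The main obstacle will be verifying the lower bound on $\|\nabla\beta_0\| = \|\nabla p_0\|$ within the thin strip around $\Psi_0$; this is handled by the closed form of Eq.~\ref{grad_p_tot}, whose only zero is the centre of the workspace distribution, a point well away from the workspace boundary. A secondary concern \--\ that the hypothesis $k \geq N(\varepsilon)$ of Proposition~\ref{prop c} be consistent with the threshold $\varepsilon_1$ \--\ is harmless, since larger $k$ only enlarges the right-hand side of the final inequality and therefore only strengthens the contradiction.
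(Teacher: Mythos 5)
There is a genuine gap, and it sits exactly where you wave it away in your closing remark. Your final displayed bound reads $\beta_0 \geq \min\{\gamma_d\|\nabla\beta_0\|\}\big/\max\{k\|\nabla\gamma_d\|+\gamma_d\|\nabla\bar{\beta}_0\|/\bar{\beta}_0\}$, and its right-hand side \emph{decreases} as $k$ grows: larger $k$ enlarges the denominator, so it weakens the would-be contradiction rather than strengthening it \--\ your claim to the contrary is backwards. This matters because the hypothesis couples $k$ to $\varepsilon$: by Eq.~\ref{N epsilon}, $k\geq N(\varepsilon)\propto 1/\varepsilon$, so at a putative critical point your lower bound on $\beta_0$ is of order $\varepsilon$ times a ratio of constants that has no reason to exceed $1$, and for $k$ strictly larger than $N(\varepsilon)$ (which the hypothesis permits) it is smaller still. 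The bound $\beta_0\gtrsim 1/k$ is therefore perfectly compatible with $\beta_0<\varepsilon$, and shrinking $\varepsilon_1$ does not rescue you, because $N(\varepsilon)$ blows up at the same rate. A pure magnitude estimate of this kind is exactly what proves Proposition~\ref{prop c}, where $\beta_i\geq\varepsilon$ is the hypothesis being violated and $k$ can be pushed up to defeat it; it cannot close the case of the thin collar around the outer boundary, where $\beta_0$ is small \emph{and} $k$ is large simultaneously.

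The missing ingredient is the sign information that the paper exploits and that your triangle inequality discards. The paper's proof evaluates the inner product $\nabla\hat{\varphi}\cdot\nabla\gamma_d=\frac{\gamma_d^k}{\beta^2}\left(4k\beta-\nabla\beta\cdot\nabla\gamma_d\right)$ and splits $\nabla\beta=\bar{\beta}_0\nabla\beta_0+\beta_0\nabla\bar{\beta}_0$, so that the bracket becomes $\beta_0\left(4k\bar{\beta}_0-\nabla\bar{\beta}_0\cdot\nabla\gamma_d\right)-\bar{\beta}_0\,\nabla\beta_0\cdot\nabla\gamma_d$. The first piece is positive once $k\geq N(\varepsilon)$ (a magnitude argument like yours), but the decisive point is the second piece: near the outer boundary $\nabla\beta_0=\nabla p_0$ points toward the workspace interior while $\nabla\gamma_d$ points away from $\q_d$, hence $\nabla\beta_0\cdot\nabla\gamma_d<0$ and the term $-\bar{\beta}_0\nabla\beta_0\cdot\nabla\gamma_d$ is strictly positive, uniformly in $k$ and without degenerating as $\beta_0\to 0$. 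So $\nabla\hat{\varphi}$ cannot vanish in $\B_0(\varepsilon_1)$. The geometric fact that the repulsion from the outer boundary and the attraction to the target never oppose each other there is what carries the proof; you need to keep that inner product intact rather than pass to norms.
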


\begin{proof}
The inner product:
\begin{equation*}
\nabla\hat{\varphi}\nabla\gamma_d =\frac{\gamma_d^k}{\beta^2}\left( 4k\beta- \nabla\beta \nabla\gamma_d\right) 
 > \beta_0 \frac{\gamma_d^k}{\beta^2}\left(4k\bar{\beta}_0 - \nabla\bar{\beta}_0\nabla\gamma_d \right) 
\end{equation*}
according to Eq. \ref{N epsilon}: $
\beta_0 \frac{\gamma_d^k}{\beta^2}\left(4k\bar{\beta}_0 - \nabla\bar{\beta}_0\nabla\gamma_d \right) >0. $ \\
To estimate the second term, define $\varepsilon_1$ as the probability for a robot located at $\q_d$ to collide with the workspace boundary.
 \begin{equation*}
\varepsilon_1 \triangleq p_{\Delta_0}-p_{tot}(\q_d,R_0,\sigma_{r})
\end{equation*}
$\beta_0$ is restricted by:  $ 
\beta_0=p_{\Delta_0}-p_{tot}(\q,R_0,\sigma_{r})<\varepsilon_1$

This is valid since all points in $\B(\varepsilon_0)$ are closer to the boundary than $q_d$.  $\nabla \beta_0$ points away from the destination $q_d$ at any point $q$ in $\B(\varepsilon_0) $ since \mbox{$\nabla\beta_0= -\nabla p_{tot}\left(\q,R_0,\sigma_{r}\right)$} (see Eq. \ref{grad_p_tot}), and \mbox{$\nabla \gamma_d = 2\|\q-\q_d\|>0$}, so  \mbox{$\nabla \gamma_d \nabla\beta_0<0 $}.
\indent This completes the proof. 
 \begin{flushright}   \small$\blacksquare$ \normalfont\end{flushright}  
\end{proof}
We showed that near the obstacles there may be critical points of $\hat\varphi$. We also proved that such points will have a negative gradient component directed tangentially to the obstacles.
Yet, in order for  $\hat\varphi$ to be a NF we need to show that it is a Morse function.\\
\begin{pro}
\label{prop_f}
$\varphi$ is a Morse function.
\end{pro}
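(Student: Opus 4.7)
Combining Propositions \ref{prop b}, \ref{prop c}, and \ref{prop_e}, for $k \ge N(\varepsilon)$ and $\varepsilon \le \min\{\varepsilon_0,\varepsilon_1\}$ the only critical points of $\hat\varphi$ in $\C_{free}$ besides the minimum at $q_d$ lie in $\bigcup_{i \in \Ob}\B_i(\varepsilon)$, and by Proposition \ref{prop_d} none of them is a local minimum. Since $\sigma_d(x)=x^{1/k}$ and $\sigma(x)=x/(1+x)$ are strictly monotone smooth diffeomorphisms on the range of $\hat\varphi$, the Morse property of $\varphi=\sigma_d\circ\sigma\circ\hat\varphi$ reduces to the Morse property of $\hat\varphi$. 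The Hessian at $q_d$ is positive definite by Proposition \ref{prop a}, so all that remains is to verify that $\nabla^2\hat\varphi(q_c)$ is non-singular at every critical point $q_c\in\bigcup_{i\in\Ob}\B_i(\varepsilon)$.

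The plan is to fix such a $q_c\in\B_i(\varepsilon)$ and show that $\nabla^2\hat\varphi(q_c)$ has exactly $n-1$ strictly negative eigenvalues and one strictly positive eigenvalue, so that it is automatically non-singular with signature $(1,n-1)$. Proposition \ref{prop_d} already provides $\hat v^T\nabla^2\hat\varphi(q_c)\hat v<0$ for every unit $\hat v$ orthogonal to $\nabla\beta_i(q_c)$; by the Courant--Fischer min-max principle, the restriction of $\nabla^2\hat\varphi(q_c)$ to this $(n-1)$-dimensional subspace being negative definite forces the full Hessian to carry at least $n-1$ strictly negative eigenvalues. It therefore suffices to exhibit a single direction $\hat u$ along which $\hat u^T\nabla^2\hat\varphi(q_c)\hat u>0$, since the remaining eigenvalue is then forced to be strictly positive.

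The natural choice is $\hat u=\nabla\gamma_d/\|\nabla\gamma_d\|$, which by (\ref{k beta}) is parallel to $\nabla\beta$ at $q_c$. Substituting $\gamma_d\nabla\beta=k\beta\nabla\gamma_d$ into (\ref{Hessian_varphi}) and using $\nabla^2\gamma_d=2I$ together with $\|\nabla\gamma_d\|^2=4\gamma_d$, the first two summands of (\ref{Hessian_varphi}) contribute $2k\beta(2k-1)$ to $\hat u^T\nabla^2\hat\varphi(q_c)\hat u$, up to the positive prefactor $\gamma_d^{k-1}/\beta^2$. Expanding $\nabla^2\beta=\beta_i\nabla^2\bar\beta_i+2(\nabla\beta_i\nabla\bar\beta_i^T)_s+\bar\beta_i\nabla^2\beta_i$, the term $-\gamma_d^2\bar\beta_i\hat u^T\nabla^2\beta_i\hat u$ is non-negative because $\nabla^2\beta_i=-\nabla^2 p_{tot}$ is negative on $\B_i(\varepsilon)$ (via (\ref{hessian_p_tot}) and $I_0>I_1$, as in the proof of Proposition \ref{prop_d}). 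The two remaining correction terms, involving $\beta_i\nabla^2\bar\beta_i$ and $(\nabla\beta_i\nabla\bar\beta_i^T)_s$, are continuous in $q$ and hence bounded on the compact closure $\overline{\B_i(\varepsilon)}$ by the extremal quantities supplied in the Appendix lemmas.

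The hard part will be turning this estimate into an explicit threshold $N'(\varepsilon)\ge N(\varepsilon)$ such that, for every $k\ge N'(\varepsilon)$, the $k^2$-growth of $2k\beta(2k-1)$ uniformly dominates the $k$-independent correction terms on every $\B_i(\varepsilon)$, $i\in\Ob$. Once such a $k$ is fixed, $\hat u^T\nabla^2\hat\varphi(q_c)\hat u>0$ at every critical point in $\bigcup_i\B_i(\varepsilon)$, so $\nabla^2\hat\varphi(q_c)$ has signature $(1,n-1)$ and is non-singular. Together with the positive-definite Hessian at $q_d$, this shows that $\hat\varphi$, and hence $\varphi$, is Morse on $\C_{free}$.
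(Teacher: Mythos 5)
Your overall strategy is the same as the paper's: both proofs take the $(n-1)$-dimensional subspace on which Proposition \ref{prop_d} makes the quadratic form of $\nabla^2\hat\varphi(\q_c)$ negative, and complete the signature count $(1,n-1)$ by exhibiting one transversal direction along which the form is strictly positive. The difference is in how that positive direction is obtained. The paper evaluates along $\tilde v=\nabla\beta_i/\|\nabla\beta_i\|$ and, after substituting Eq.~\ref{k beta} into Eq.~\ref{Hessian_varphi}, isolates the term $\frac{\gamma_d}{\beta_i}\left(1-\frac{1}{k}\right)\bar\beta_i\|\nabla\beta_i\|^2$, which stays bounded away from zero while every competing term carries an extra factor of $\beta_i\le\varepsilon$; positivity is then bought by shrinking $\varepsilon$ below the explicit thresholds $\varepsilon',\varepsilon''$ with $k\ge 2$ held fixed. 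You evaluate along $\nabla\gamma_d$ (parallel to $\nabla\beta$ at $\q_c$) and try to buy positivity by enlarging $k$. Your algebra up to the value $2k\beta(2k-1)$ is correct, and the eigenvalue-counting logic (Courant--Fischer plus one positive direction) is sound.

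The gap is in the final quantitative claim. The term $2k\beta(2k-1)$ does not have uniform $k^2$-growth on $\B_i(\varepsilon)$, because $\beta$ is not bounded below there: as $\q$ approaches $\partial\B_i$ the product $k\beta=k\beta_i\bar\beta_i$ can be made arbitrarily small for any fixed $k$, while the cross term $-2\gamma_d\,\hat u^T(\nabla\beta_i\nabla\bar\beta_i^T)_s\hat u$ does \emph{not} vanish as $\beta_i\to 0$ and has no controlled sign. To rescue the estimate you must use the critical-point identity itself: Eq.~\ref{k beta} gives $\beta=\sqrt{\gamma_d}\,\|\nabla\beta\|/(2k)$ at $\q_c$, so the ``positive'' term is actually $(2k-1)\sqrt{\gamma_d}\,\|\nabla\beta\|$ --- linear, not quadratic, in $k$ --- and it dominates only after you also establish positive lower bounds for $\gamma_d$ and for $\|\nabla\beta\|\ge\bar\beta_i\|\nabla\beta_i\|-\beta_i\|\nabla\bar\beta_i\|$ on $\B_i(\varepsilon)$, which again forces a smallness condition on $\varepsilon$. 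With that insertion your route closes and yields a threshold on $k$ depending on $\varepsilon$, in place of the paper's thresholds $\varepsilon',\varepsilon''$ on $\varepsilon$ with $k\ge2$; but as written the step you defer as ``the hard part'' is precisely the content of the proposition, and the $k^2$-domination claim used to motivate it is false without the substitution above.
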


\begin{proof}
We would like to prove that the component  of the gradient of $\hat\varphi$ in the radial direction to the obstacle is positive. This way $\nabla \hat\varphi$ will not have any degenerate direction as required.\\
Substituting \ref{k beta} into Eq. \ref{Hessian_varphi} and multiplying both sides of the equation by: $ \tilde{v} \triangleq \frac{\nabla\beta_i}{\|\nabla\beta_i\|}$ it becomes:

\begin{equation*}
\begin{aligned}
&\frac{\beta^2}{\gamma_d^{k-1}} \tilde{v}^T \nabla^2 \hat{\varphi}\tilde{v} = \\
= &\frac{\gamma_d}{2k\beta}\|\nabla\beta\|^2 + \left( 1-\frac{1}{k} \right) \frac{\gamma_d}{\beta} \left(\nabla\beta \cdot \tilde{v}\right)^2 - \gamma_d \tilde{v}^T \nabla^2\beta \tilde{v} 
 \end{aligned}
\end{equation*}
Algebraic manipulations lead to (compare with [Prop. 3.9,\cite{koditschek1990robot}]):

\begin{equation*}
\begin{aligned}
\frac{\beta^2}{\gamma_d^{k-1}} \tilde{v}^T \nabla^2 \hat{\varphi}\tilde{v}  
&\geq \frac{\gamma_d}{\beta_i} \left( \left( 1-\frac{1}{k} \right)\bar{\beta}_i \|\nabla\beta_i\|^2 -\right.\\
&\left. - \tilde{v}^T \left(\beta_i^2 \nabla^2\bar{\beta}_i + \beta_i \bar{\beta}_i \nabla^2 \beta_i \right) \tilde{v}  \right) 
 \end{aligned}
\end{equation*}
Since $q \in \B_i(\varepsilon)$, and assuming that $k \geq 2$ it can be rearranged as:
\small
$$
 \begin{aligned}
 \frac{\gamma_d}{\beta_i} \left( \left( \frac{\bar{\beta}_i}{4} \|\nabla\beta_i\|^2   - \varepsilon \bar{\beta}_i \tilde{v}^T \nabla^2 \beta_i  \tilde{v} \right.  \right)  
   \left.  +\left(\frac{\bar{\beta}_i}{4} \|\nabla\beta_i\|^2  -\varepsilon^2 \tilde{v}^T \nabla^2\bar{\beta}_i \tilde{v} \right) \right) 
 \end{aligned}
 $$
\normalsize
For the first term to be positive we require:
 \begin{equation*}
 \varepsilon \leq  \frac{\min\limits_{\q \in \B_i(\varepsilon)} \{ \|\nabla\beta_i \|^2 \}}{4  \max\limits_{\q \in \B_i(\varepsilon)} \{|\nabla^2 \beta_i|\}} \triangleq \varepsilon'
\end{equation*} 
 and a sufficient condition for the second term to be positive  we require:
 \begin{equation*}
 \varepsilon^2 \leq \frac{\bar{\beta}_i\|\nabla\beta_i\|^2}{4 |\tilde{v}^T \nabla^2\beta_i\tilde{v}|} \leq  \frac{\min \{\sqrt{\bar{\beta}_i}\|\nabla\beta_i\|\}}{2 \sqrt{\max \{|\nabla^2\beta_i|\}}} \triangleq \varepsilon''  
\end{equation*}  
See Lemmas \ref{lemma_max_nab_beta_i} and \ref{max_bar_beta} in the Appendix for explicit expressions for the extremal terms. By restricting the distance of $\q$ to the obstacles such that $\beta(\q)<\min \{\varepsilon',\varepsilon''\}$, we guarantee that $\varphi$ is a Morse function.
\vspace{-0.5cm}
\begin{flushright}   \small$\blacksquare$ \normalfont\end{flushright}  
\end{proof}
Finally, in order for $\varphi$ to be a NF in all $\C$, we require that $\varepsilon=min\{ \varepsilon_0,\varepsilon_1,\varepsilon',\varepsilon''\}$ which is also required for determining $k$ (constrained by $k>N(\varepsilon)$).
\section{Some Examples}
\label{ex}
This section presents examples of the PNF motion planning using MATLAB. We set the world's radius to $45$ units length.
Fig. \ref{fig:PNF's}.a depicts a stochastic scenario where the obstacles radii from the top right c.w. are $4,4$ and $2$, while the locations' STDs are $6,5 $, and $4$ respectively. The robot radius is $3$ and its location STD is $4$. $\Delta$  is chosen as $0.1$ while $k$ is chosen to be $5$ empirically (the larger $k$ is, the closer the PNF allows the robot to approach the obstacles).  
Fig. \ref{fig:PNF's}.b depicts a scenario where the obstacles have the same geometry, while the STDs are $30,5$, and $4$ respectively. $k$  is again chosen as $5$ , and $\Delta$ remains $0.1$.
In Fig.\ref{fig:PNF's}.c we use the same geometries and the same standard deviations as in Fig.\ref{fig:PNF's}.a, but $k$  is chosen as $2$  and $\Delta$ remains the same, (observe that the PNF seems farther to the obstacles). \\
As for a different selections of $\Delta$, in Fig.\ref{fig:PNF's}.d $\Delta=0.6$ and the path length is $77$ units, where in Fig.\ref{fig:PNF's}.e $\Delta=0.8$ which results in path length of $84$ units. \\
Finally, Fig.\ref{fig:PNF's}.f depicts a simulation with two slightly different initial configurations ($S_1$ and $S_2$) which results with bifurcation. Moreover, note that poorly chosen constant $k$ ($2$) results in undesirable local minima located at two points.  
In this case $N(\varepsilon)$ is large since the obstacles are close to each other, resulting in a small $\varepsilon$ (see Prop. \ref{prop_d}). Recall that $k>N(\varepsilon)$ and thus $k$ should set larger avoid bifurcation.

\begin{figure}[h]
\centering
\includegraphics[width=1.05\linewidth]{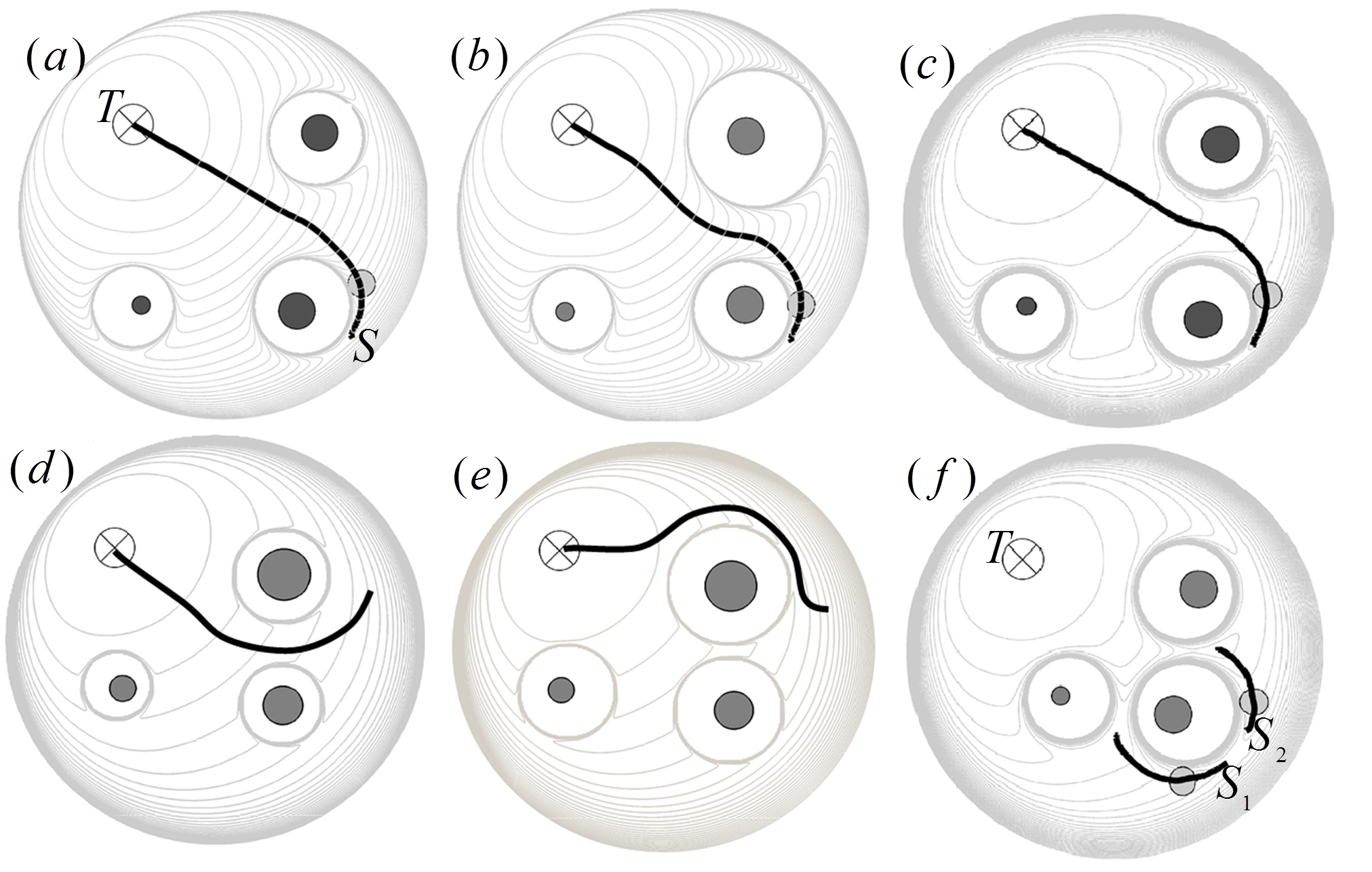}
\caption{Implementation of the PNF for sphere-world motion planning problem. The target is marked as $\otimes$. The bold line indicates the path from the initial point to the target (or to local minimum at $f$). The dark solid discs are the obstacles and the light solid disc the robot}
\label{fig:PNF's}
\end{figure}
Table 1 compares the performance of a PNF with a path planning generated by a traditional NF where we considered an inflated geometry of the obstacles with radii equivalent to $\Delta$ (e.g. for $\Delta=0.9$ the radius addition is $1.67$ STDs). Note that prior to constructing the NF, we performed a convolution of the robot's geometry with the geometry of the obstacles (as performed for the PNF). This was essential in order to compare the resulting paths from the two functions.
\begin{table}[h]
\begin{center}
\begin{tabular}{|c|c|c|c|c|}
\hline 
\small  {$\Delta$} &  {Method} & Path Length &  {STD} &  Failure [$\%$]     \\
 \hline\hline
\small \multirow{2}{*}{0.9}  & PNF & 46.74  & 28.38 & 1.53 \\

\small  & NF & 38.95 & 10.87  & 8.56  \\
 \hline
\small  \multirow{2}{*}{0.67} & PNF &  44.09 & 25.16  & 2.64\\
 
 \small & NF & 37.20 & 7.48 & 9.02  \\
\hline 
\end{tabular} 
\end{center}
\vspace{3mm}
{\small Table 1: Performance comparison of the PNF and NF with different $\Delta$s. Numbers are the average of $200$ different simulations with the same distributions and geometries. Failure refers to an obstacle-robot collision.}
\end{table}
\section{Summary}
We defined a probabilistic navigation function, such that following its gradient produces a path that decreases the probability for collision with the obstacles and converges to the target point.
In order to provide a "safe" motion path, we included an additional requirement for a maximal permitted probability for collision.\\  
We have introduced such a function $\varphi$, defined on $\C\subset \R^n$ and showed that $\varphi$ is indeed a probabilistic NF. \\
We proved that the  PNF converges for all stochastic scenarios. In order for the analysis to be as analytic as possible we assumed disc-shaped elements and Radial Gaussian distributions to model the uncertainties. That is, given a disc-shaped robot and disc-shaped obstacles with given uncertainties in their locations (in a disc-shaped world), we have shown how to construct $\varphi$ which will safely transverse to the target.  
Note that the discussion in this paper can be generalized to star-shaped worlds as well, in exactly the same manner as used in \cite{rimon1992exact}.  \\
We have demonstrated our algorithm on various scenarios, showing how the selection of $k$ affects the resulting paths. We also provided experimental results showing the effect of the extent of uncertainty on the path. Lastly, we compared the PNF to a simple NF showing that the resulting path from the PNF is safer (but naturally longer).\\
The PNF can be further extended to algorithmically include the robot's dynamics see for example \cite{howard2006trajectory}. In future work we intend to apply the PNF to the more general problem of stochastic-dynamic environment and to include generalized Gaussian distributions and geometries. The authors also wish to continue investigating a version where there is no assumption for pairwise obstacle distances- this is done by composing a second function (similar to that introduced in \cite{rimon1992exact}) that can handle the case of non-spherical unified obstacles.  
\bibliographystyle{plain} 
\bibliography{PNF_bib}
\appendix
\section*{Appendix} 
\small
Now, we shall prove some of the bounding $\varepsilon$'s we used in Section.\ref{sec is NF}.

\begin{lemma}
\label{lemma_max_nab_beta_i}
$\max\limits_{ \q} \{\| {\nabla {\beta _i}} \|\}\le \frac{e^{-\frac{1}{2}}}{\sqrt{2\pi} \sigma_i^2}$ 
and,\\ $\max\limits_{ \q} \{\| {\nabla^2 {\beta _i}} \|\}\le \sqrt{\frac{2}{\pi}}\frac{e^{-\frac{3}{2}}}{\sigma_i^3} $.
\end{lemma}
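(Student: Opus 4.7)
The plan is to reduce both bounds to elementary Gaussian-derivative maxima by exploiting the convolutional definition of $p_{tot}$ in Eq.~\ref{P_total}, rather than working directly with the Bessel-function closed forms in Eqs.~\ref{grad_p_tot} and \ref{hessian_p_tot}. Since $\beta_i(\q)=p_{\Delta_i}-p_{tot}(\q-\q_i,R_i,\sigma_i)$ with the first term constant, $\nabla\beta_i=-\nabla p_{tot}$ and $\nabla^2\beta_i=-\nabla^2 p_{tot}$, so it suffices to estimate $\sup_\q\|\nabla p_{tot}\|$ and $\sup_\q\|\nabla^2 p_{tot}\|$.

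First I would rewrite $p_{tot}=(\widetilde{obs_i}\ast f_i)/A_i$, with $A_i=\int\widetilde{obs_i}(\rr)\,d\rr$ the disc volume, and commute differentiation through the convolution onto the Gaussian factor, so that $\nabla p_{tot}=(\widetilde{obs_i}\ast\nabla f_i)/A_i$ and analogously for the Hessian. Because $\widetilde{obs_i}\ge 0$ and integrates to $A_i$, a pointwise Young-type inequality gives
\[
\sup_\q\|\nabla p_{tot}(\q)\|\le\sup_\rr\|\nabla f_i(\rr)\|,\qquad \sup_\q\|\nabla^2 p_{tot}(\q)\|\le\sup_\rr\|\nabla^2 f_i(\rr)\|,
\]
where for the Hessian the matrix inequality is reduced to a scalar one by testing against an arbitrary unit vector inside the convolution integral. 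This transfers the problem to maximizing the derivatives of a pure Gaussian of variance parameter $\sigma_i$.

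For the gradient, $\|\nabla f_i(\rr)\|=(\|\rr\|/\sigma_i)\,f_i(\rr)$ is a function of $r=\|\rr\|$ alone; differentiating in $r$, setting the derivative to zero, and substituting back yields the stated constant $e^{-1/2}/(\sqrt{2\pi}\,\sigma_i^2)$. For the Hessian, the radial symmetry of $f_i$ makes $\nabla^2 f_i(\rr)$ split into one radial and $n-1$ degenerate tangential eigenvalues, each a closed-form function of $r$; the same univariate maximization along the dominant branch identifies the critical radius $r=\sqrt{3\sigma_i}$ and produces the constant $\sqrt{2/\pi}\,e^{-3/2}/\sigma_i^3$.

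The main obstacle I expect is the bookkeeping for the Hessian: one must identify which eigenvalue branch dominates and verify that the maximum is truly attained at $r=\sqrt{3\sigma_i}$ once the interior of the safety ball $\|\q-\q_i\|<R_{\Delta_i}$ is excluded, because the radial branch of the raw Gaussian's Hessian is in fact larger in magnitude at $r=0$ and would spoil the stated constant if allowed to contribute. Once the correct region of maximization is pinned down, both estimates collapse to the standard one-dimensional exercise of maximizing $r^{a}e^{-r^{2}/(2\sigma_i)}$, and the lemma follows.
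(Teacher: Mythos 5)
Your proposal follows essentially the same route as the paper's proof: both commute the differentiation through the convolution onto the Gaussian factor and then invoke Young's inequality to reduce the problem to a one-variable extremization of the Gaussian's derivatives. The only methodological difference is the choice of H\"older pairing. The paper bounds $\|D*\nabla G\|_\infty$ by $\|D\|_\infty\,\|\nabla G\|_{L^1}$ (unit-height disc against the $L^1$ norm of the Gaussian gradient), whereas you normalize the disc to a probability density and bound by $\sup_{\rr}\|\nabla f_i(\rr)\|$. Your pairing is actually the one consistent with the stated constants: $e^{-1/2}/(\sqrt{2\pi}\sigma_i^2)$ carries the factor $e^{-1/2}$ and the $\sigma_i^{-2}$ scaling of the \emph{pointwise} maximum of the one-dimensional Gaussian derivative (attained at one standard deviation from the mean), not of its $L^1$ norm, which scales as $\sigma_i^{-1}$. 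So for the gradient bound your argument is correct and, if anything, a cleaner derivation of the constant the lemma asserts.

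Your closing worry about the Hessian is legitimate and exposes a gap that the paper dismisses with ``using the same logic.'' The global maximum of the magnitude of the one-dimensional Gaussian second derivative is attained at the origin, with value $1/(\sqrt{2\pi}\sigma_i^3)\approx 0.399/\sigma_i^3$, which strictly exceeds the stated bound $\sqrt{2/\pi}\,e^{-3/2}/\sigma_i^3\approx 0.178/\sigma_i^3$; the latter is only the secondary local maximum attained at radius $\sqrt{3}\,\sigma_i$. Your proposed repair \--\ excluding the interior of the safety ball \--\ does not immediately close this, because the convolution $\int D(\vv)\,\nabla^2 f_i(\q-\vv)\,d\vv$ still samples the Gaussian Hessian at points $\q-\vv$ arbitrarily close to the obstacle center whenever $\q$ sits near the dilated boundary, so the region restriction on $\q$ does not translate into a region restriction on the argument of $\nabla^2 f_i$. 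An honest bound along either your route or the paper's yields the larger constant $1/(\sqrt{2\pi}\sigma_i^3)$; this does not break the overall argument, since the constant only enters the $\varepsilon$ thresholds of Propositions \ref{prop_d} and \ref{prop_f}, but the second inequality of the lemma as stated is not established by either proof.
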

\begin{proof}
Throughout the paper  $\| \ \|$ denoted  the Euclidean norm.  Here we use $\| \ \|_p$ to indicate the general p-norm (e.g. $\| \ \|_2=\|\ \ \|$ ). \--\
Recall that $\beta_i$ is based on the convolution of the disc with a Gaussian. Thus as a consequence of Young's inequality \cite{fournier1977sharpness}, $\| \nabla \beta_i\|_2$ can be written as:
\begin{equation*}
\|\nabla \left( D(r,R_i)  *G(r,\sigma_i) \right)\|_2 = \|D(r,R_i) *\nabla G(r,\sigma_i)\|_2
\end{equation*}
Again using Young's inequality, this amounts to:
\begin{equation*}
\|D(\rr,R_i) *\nabla G(r,\sigma_i)\|_2\leq c_{2,1}\|D(\rr,R_i)\|_2 *\|\nabla G(\rr,\sigma_i)\|_1
\end{equation*}
where $c_{2,1}<1$. Since $D(\rr,R_i)$ is a disc with a unit height we have:

\begin{equation*}
\max\{\|\nabla \beta_i\|\}\leq \max\{\|\nabla G(\rr,\sigma_i)\|_1\} = \frac{e^{-\frac{1}{2}}}{\sqrt{2\pi} \sigma_i^2}
\end{equation*}
Using the same logic:

\begin{equation*}
\max\{\|\nabla^2 \beta_i\|\}\leq \max\{\|\nabla^2 G(\rr,\sigma_i)\|_1\} = \sqrt{\frac{2}{\pi}}\frac{e^{-\frac{3}{2}}}{\sigma_i^3}
\end{equation*}
 \begin{flushright}   \small$\blacksquare$ \normalfont \end{flushright}  
\end{proof}
\begin{lemma}
\label{max_bar_beta}
\begin{equation*}
\max\limits_{\q \in \B_i(\varepsilon)}\{ \bar{\beta}_i \} =\prod\limits_{j \in  \{\Ob-i\}}  p_{\Delta_j}-p_{tot}(\|\q_j-\q_i\| +R_{\varepsilon},R_j, \sigma_j) 
\end{equation*}  
\begin{equation*}
\min\limits_{\q \in \B_i(\varepsilon)}\{ \bar{\beta}_i \} =\prod\limits_{j \in  \{\Ob-i\}}  p_{\Delta_j}-p_{tot}(\|\q_j-\q_i\| -R_{\varepsilon},R_j, \sigma_j)
\end{equation*} 
\end{lemma}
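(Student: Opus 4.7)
The plan is to exploit the fact that the region $\B_i(\varepsilon)$ is a thin annular neighbourhood of $\q_i$, combined with the strict radial monotonicity of $p_{tot}$, and then translate these geometric bounds into factor-wise bounds on the product $\bar\beta_i$.

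First I would characterise $\B_i(\varepsilon)$ as an annulus centred at $\q_i$. Since Eq.~\ref{grad_p_tot} shows that $\nabla p_{tot}(\q,R_i,\sigma_i) = -c\,\q\,I_0(\|\q\|R_i/\sigma_i)\exp(\cdot)$ with a strictly positive scalar $c$, and since $I_0>0$, the function $p_{tot}$ is strictly decreasing in $\|\q\|$ for $\|\q\|>0$. Consequently $\beta_i(\q)=p_{\Delta_i}-p_{tot}(\q-\q_i,R_i,\sigma_i)$ is strictly increasing in $\|\q-\q_i\|$, so the level set $\beta_i(\q)=\varepsilon$ is a sphere of some radius $R_\varepsilon>R_{\Delta_i}$, and $\B_i(\varepsilon)=\{\q:R_{\Delta_i}<\|\q-\q_i\|<R_\varepsilon\}$.

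Next, for any $\q\in\B_i(\varepsilon)$ and any $j\in\Ob\setminus\{i\}$, the triangle inequality gives
\begin{equation*}
\|\q_j-\q_i\|-R_\varepsilon \;\le\; \|\q-\q_j\| \;\le\; \|\q_j-\q_i\|+R_\varepsilon .
\end{equation*}
Applying the monotonicity of $p_{tot}$ to each side turns these into
\begin{equation*}
p_{\Delta_j}-p_{tot}(\|\q_j-\q_i\|+R_\varepsilon,R_j,\sigma_j)
\;\ge\;\beta_j(\q)\;\ge\;
p_{\Delta_j}-p_{tot}(\|\q_j-\q_i\|-R_\varepsilon,R_j,\sigma_j).
\end{equation*}
The assumption that the $R_\Delta$-dilated obstacles are pairwise disjoint forces $\|\q_j-\q_i\|-R_\varepsilon$ to exceed $R_{\Delta_j}$, so the lower bound on $\beta_j$ is strictly positive. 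Since every factor in $\bar\beta_i=\prod_{j\neq i}\beta_j$ is positive, multiplying the per-factor bounds yields exactly the two expressions in the lemma.

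The only subtle point is that the stated ``$\max$'' and ``$\min$'' are genuine extrema only in the degenerate configuration where all $\q_j$ lie on a common ray through $\q_i$; in general they are tight factor-wise bounds that are attained per $j$ but not simultaneously by a single $\q$. This is precisely what Propositions \ref{prop_d} and \ref{prop_f} need (they only use $\bar\beta_i$ as a uniform bound when estimating $\varepsilon_0$, $\varepsilon'$, $\varepsilon''$), so reading the statement as ``an explicit upper/lower bound sharp factor-wise'' is sufficient. The workspace boundary term $\beta_0$, whose sign convention is flipped, requires the symmetric argument with $p_{tot}$ replaced by its complement; this is the one bookkeeping step that needs care but introduces no new ideas.
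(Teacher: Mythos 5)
Your proof follows essentially the same route as the paper's: both reduce the extremum of $\bar\beta_i$ over the annulus $\B_i(\varepsilon)$ to factor-wise extrema of each $\beta_j$, obtained from the radial monotonicity of $p_{tot}$ together with the triangle inequality, and you merely make explicit the monotonicity argument (via Eq.~\ref{grad_p_tot}) and the annulus characterization that the paper's two-line proof leaves implicit. Your added caveat --- that the product of factor-wise maxima is in general only an upper bound on the maximum of the product, since the per-$j$ extremizers need not coincide at a single $\q$ --- is a correct and worthwhile sharpening of the statement as written, and, as you observe, a uniform upper/lower bound is all that Propositions \ref{prop_d} and \ref{prop_f} actually require.
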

\begin{proof}
Since $\bar{\beta_i}= \prod\limits_{j\in \{\Ob-i\}} \beta_j$  we have 
\begin{equation*}
\max\limits_{\q \in \B_i(\varepsilon)}\{ {\beta}_j \} = p_{\Delta_j}-p_{tot}(\|\q_j-\q_i\| +R_{\varepsilon},R_j, \sigma_j) \end{equation*}  
where $R_{\varepsilon}$ is a scalar that satisfies $p_{tot}(\|\q_i+R_{\varepsilon}\|,R_j,\sigma_j) =\varepsilon$. In the same way we obtain the second result.
\begin{flushright}   \small$\blacksquare$ \normalfont\end{flushright}  
\end{proof}

\begin{lemma}
\label{max_nabla_bar}
$\max\{\|\nabla \bar{\beta}_i\| \} \leq \frac{1}{\sqrt{2\pi e}}\sum\limits_{j\in \{\Ob-i\}}\frac{1}{\sigma_j^2} $
\end{lemma}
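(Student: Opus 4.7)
The plan is to differentiate $\bar{\beta}_i=\prod_{j\in\{\Ob-i\}}\beta_j$ by the product rule, apply the triangle inequality, and then feed in the per-factor bound already established in Lemma \ref{lemma_max_nab_beta_i}. Concretely, I would start from
\begin{equation*}
\nabla\bar{\beta}_i \;=\; \sum_{j\in\{\Ob-i\}}\nabla\beta_j\prod_{\substack{l\in\{\Ob-i\}\\ l\neq j}}\beta_l,
\end{equation*}
so that
\begin{equation*}
\|\nabla\bar{\beta}_i\|\;\leq\;\sum_{j\in\{\Ob-i\}}\|\nabla\beta_j\|\prod_{\substack{l\in\{\Ob-i\}\\ l\neq j}}|\beta_l|.
\end{equation*}
Lemma \ref{lemma_max_nab_beta_i} gives $\|\nabla\beta_j\|\leq e^{-1/2}/(\sqrt{2\pi}\,\sigma_j^2) = 1/(\sqrt{2\pi e}\,\sigma_j^2)$, which is exactly the per-term constant appearing in the desired bound. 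Thus the lemma reduces to showing that the trailing product $\prod_{l\neq j}|\beta_l|$ can be dropped, i.e.\ is at most $1$.

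The main (and only non-routine) obstacle is therefore controlling each scalar factor $|\beta_l|$. Since $\beta_l(\q)=p_{\Delta_l}-p_l(\q)$ with $p_l=p_{tot}(\cdot-\q_l,R_l,\sigma_l)$, it suffices to show $0\leq p_{tot}\leq 1$ pointwise. This follows directly from the explicit series representation in Eq.\ \ref{P_total2}: each normalized incomplete gamma value satisfies $P(m+n/2,R^2/(2\sigma))\in[0,1]$, so term-by-term comparison yields
\begin{equation*}
p_{tot}(\x,R,\sigma) \;\leq\; e^{-\|\x\|^2/(2\sigma)}\sum_{m=0}^{\infty}\frac{1}{m!}\!\left(\frac{\|\x\|^2}{2\sigma}\right)^{\!m}\;=\;1.
\end{equation*}
Consequently $p_{\Delta_l}\in[0,1]$ as well (it is a particular value of $p_{tot}$), and $|\beta_l|\leq\max(p_{\Delta_l},p_l(\q))\leq 1$.

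Combining these observations, the trailing product is at most $1$, and the inequality
\begin{equation*}
\|\nabla\bar{\beta}_i\|\;\leq\;\sum_{j\in\{\Ob-i\}}\frac{1}{\sqrt{2\pi e}\,\sigma_j^2}\;=\;\frac{1}{\sqrt{2\pi e}}\sum_{j\in\{\Ob-i\}}\frac{1}{\sigma_j^2}
\end{equation*}
follows, which is exactly the stated bound. The only care needed is to justify the pointwise bound on $p_{tot}$ from its series form; everything else is a direct invocation of Lemma \ref{lemma_max_nab_beta_i} together with the product rule and the triangle inequality.
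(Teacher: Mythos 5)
Your proposal is correct and follows essentially the same route as the paper: product rule on $\bar{\beta}_i$, triangle inequality, the bound on $\|\nabla\beta_j\|$ from Lemma~\ref{lemma_max_nab_beta_i}, and dropping the trailing product because each $|\beta_l|\leq 1$. The only difference is that you explicitly justify $0\leq p_{tot}\leq 1$ from the series in Eq.~\ref{P_total2}, whereas the paper simply asserts $\max_q\{\beta_i\}=1$; your version is a slightly more careful writing of the same argument.
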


\begin{proof}
\begin{equation*}
\nabla \bar{\beta}_i=\sum_{j \in \{\Ob-i\}}{\nabla \beta_j\prod\limits_{k\ne i,j}\beta_k}.
\end{equation*}
The result follows since $\max\limits_q\{\beta_i\}=1$ and by Lemma \ref{lemma_max_nab_beta_i}.
 \begin{flushright}   \small$\blacksquare$ \normalfont\end{flushright}  
\end{proof}

\end{document}